\documentclass{article}
\usepackage[final,nonatbib]{neurips_2025}
\usepackage[numbers,glossaries]{preamble}

\title{Offline Goal-conditioned Reinforcement Learning with Quasimetric Representations}
\author{
    Vivek Myers
    \quad
    Bill Chunyuan Zheng
    \quad
    Benjamin Eysenbach$^{\dagger}$
    \quad
    Sergey Levine
    \\[1ex]
    UC Berkeley
    \quad
    $^\dagger$\unskip Princeton University
    \medskip
}

\providecommand{\vivek}[2][]{{\protect\color{violet}{[\textbf{Vivek\if\relax#1\relax\else(#1)\fi:} #2]}}}

\begin{document}
\makeatletter
\preto\@noticestring{%
    Website and code: \url{https://tmd-website.github.io/}\smallskip\\%
}
\makeatother

\maketitle

\begin{abstract}
    Approaches for goal-conditioned reinforcement learning (GCRL) often use learned state representations to extract goal-reaching policies.
Two frameworks for representation structure have yielded particularly effective GCRL algorithms: (1) \emph{contrastive representations}, in which methods learn ``successor features'' with a contrastive objective that performs inference over future outcomes, and (2) \emph{temporal distances}, which link the (quasimetric) distance in representation space to the transit time from states to goals.
We propose an approach that unifies these two frameworks, using the structure of a quasimetric representation space (triangle inequality) with the right additional constraints to learn successor representations that enable optimal goal-reaching.
Unlike past work, our approach is able to exploit a \textbf{quasimetric} distance parameterization to learn \textbf{optimal} goal-reaching distances, even with \textbf{suboptimal} data and in \textbf{stochastic} environments.
This gives us the best of both worlds: we retain the stability and long-horizon capabilities of Monte Carlo contrastive RL methods, while getting the free stitching capabilities of quasimetric network parameterizations.
On existing offline GCRL benchmarks, our representation learning objective improves performance on stitching tasks where methods based on contrastive learning struggle, and on noisy, high-dimensional environments where methods based on quasimetric networks struggle.

\end{abstract}

\begin{figure}
    \centering
    \includegraphics[width=\linewidth]{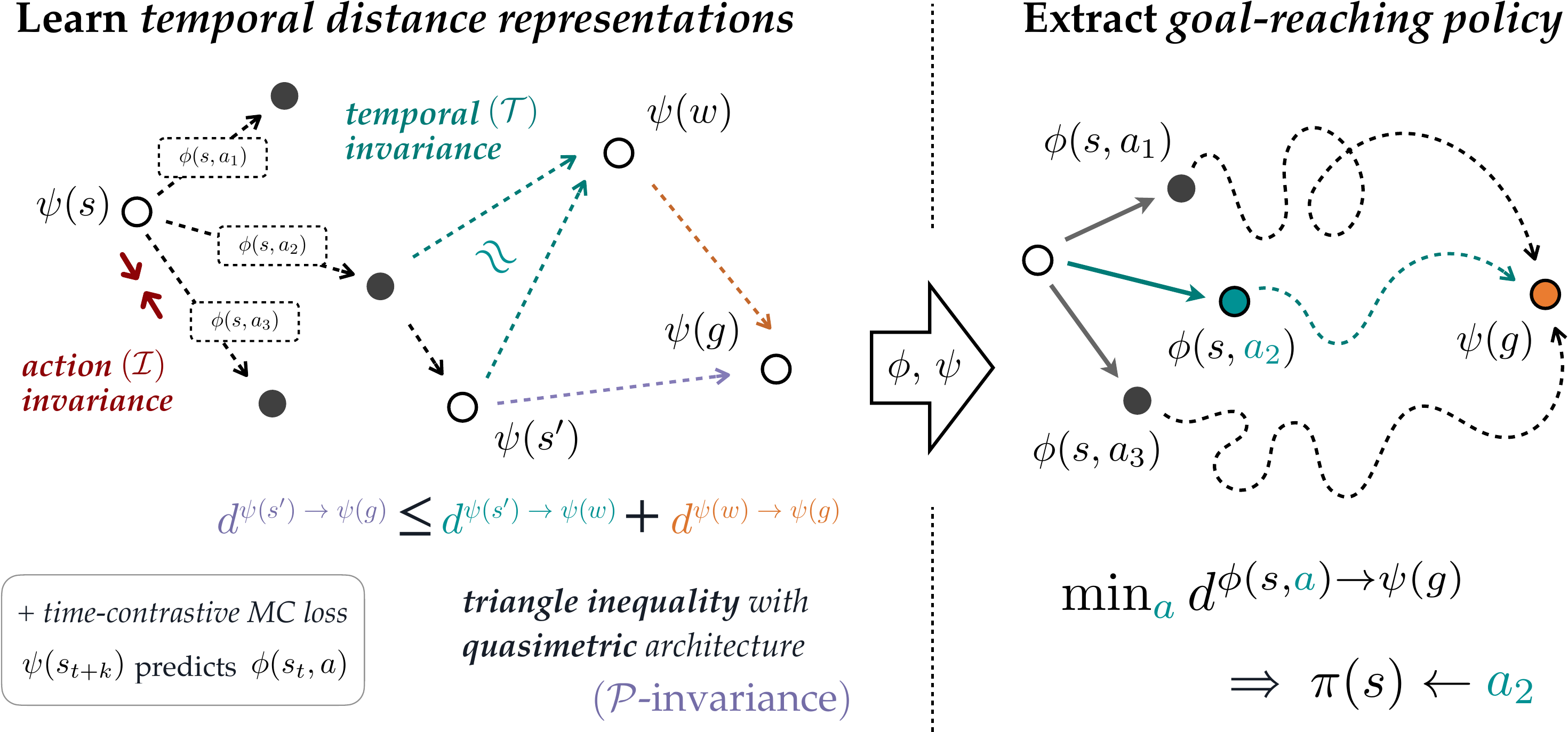}
    \caption{
        \figleft TMD learns a temporal distance $d_{\theta}$ that satisfies the triangle inequality and action invariance.
        It does this by minimizing the distance between the learned distance and the distance between the successor features of the states and actions in the dataset.
        \figright The learned distance is used to extract a goal-conditioned policy.
    }
    \label{fig:method-summary}
\end{figure}

\newacronym{tmd}{TMD}{Temporal Metric Distillation}
\newacronym{cmd}{CMD}{Contrastive Metric Distillation}
\newacronym{crl}{CRL}{Contrastive Reinforcement Learning}
\newacronym{rl}{RL}{Reinforcement Learning}
\newacronym{mdp}{MDP}{Markov Decision Process}
\newacronym{qrl}{QRL}{Quasimetric Reinforcement Learning}
\newacronym{gcbc}{GCBC}{Goal-Conditioned Behavioral Cloning}
\newacronym{gciql}{GCIQL}{Goal-Conditioned Implicit Q-Learning}
\newacronym{gcivl}{GCIVL}{Goal-Conditioned Implicit Value Learning}
\newacronym{mrn}{MRN}{Metric Residual Network}
\newacronym{iqe}{IQE}{Interval Quasimetric Embedding}
\newacronym{td}{TD}{Temporal Difference}
\newacronym{bc}{BC}{Behavioral Cloning}
\newacronym{gcrl}{GCRL}{Goal-Conditioned Reinforcement Learning}

\section{Introduction}
\label{sec:intro}

Learning temporal distances lies at the heart of many important problems in both control theory and reinforcement learning.
In control theory, such distances form important Lyapunov functions~\citep{sontag1989universal} and control barrier functions~\citep{ames2019control}, and are at the core of reachability analysis~\citep{althoff2010reachability} and safety filtering~\citep{hsu2023safety}
In \gls{rl}, such distances are important not just for safe \gls{rl}~\citep{gu2022review}, but also for forming value functions in tasks ranging from navigation~\citep{shah2022lmnav} to combinatorial reasoning~\citep{ghugare2024closing} to robotic manipulation~\citep{ma2023vip,nair2022r3m}.
Ideally, these learned distances have two important properties: \emph{(i)} they can encode paths that are shorter than those demonstrated in the data (i.e., stitching); and \emph{(ii)} they can capture long-horizon distances with low variance.

Current methods for learning temporal distances typically only achieve one of these properties.
Methods based on Q-learning~\citep{lin1992selfimproving,andrychowicz2017hindsight,pong2018temporal} stitch trajectories with \gls{td} updates to find shortest paths, but often produce compounding errors that make it challenging to apply to long-horizon tasks~\citep{kumar2019stabilizing}.
Monte Carlo methods~\citep{dosovitskiy2017learning,eysenbach2022contrastive} can directly learn goal-reaching value functions, which can be connected to temporal distances~\citep{myers2024learning}, but their ability to find \emph{shortest} paths remains limited.
 Methods based on learning a \gls{quasimetric} geometry~\citep{liu2023metric,wang2023optimal,myers2024learning}, which impose a triangle inequality over distances as an architectural invariance, do find shortest paths and don't require dynamic programming with compounding errors, but fail in stochastic settings and/or when learning from off-policy (suboptimal) data.

The aim of this paper is to build a method for learning temporal distances that retains the long-horizon estimation capabilities of Monte Carlo methods but nonetheless is able to compute shortest paths.
We take an invariance perspective to do this.
Temporal distances satisfy various invariance properties.
Because they are value functions, they satisfy the Bellman equations.
Prior work has also shown that they satisfy the triangle inequality, even in stochastic settings~\citep{wang2023optimal,myers2024learning}.
The triangle inequality, also a form of invariance~\citep{myers2025horizon}, is powerful because it lets us architecturally winnow down the hypothesis space of temporal distances by only considering neural network architectures that satisfy the triangle inequality~\citep{liu2023metric,wang2022improved}.

Importantly, the fact that temporal distances satisfy the triangle inequality holds for \emph{any} temporal distance, including both optimal temporal distances and those learned by Monte Carlo methods.
This raises an important question: might there be an \emph{additional} invariance that is satisfied by optimal temporal distances, but not those learned by Monte Carlo methods?
Identifying such invariance properties that would enable us to use Monte Carlo methods to architecturally winnow the hypothesis space, and then use this additional invariance property to identify optimal temporal distances within that space.

\begin{figure}
    \centering
    \includegraphics[width=\linewidth]{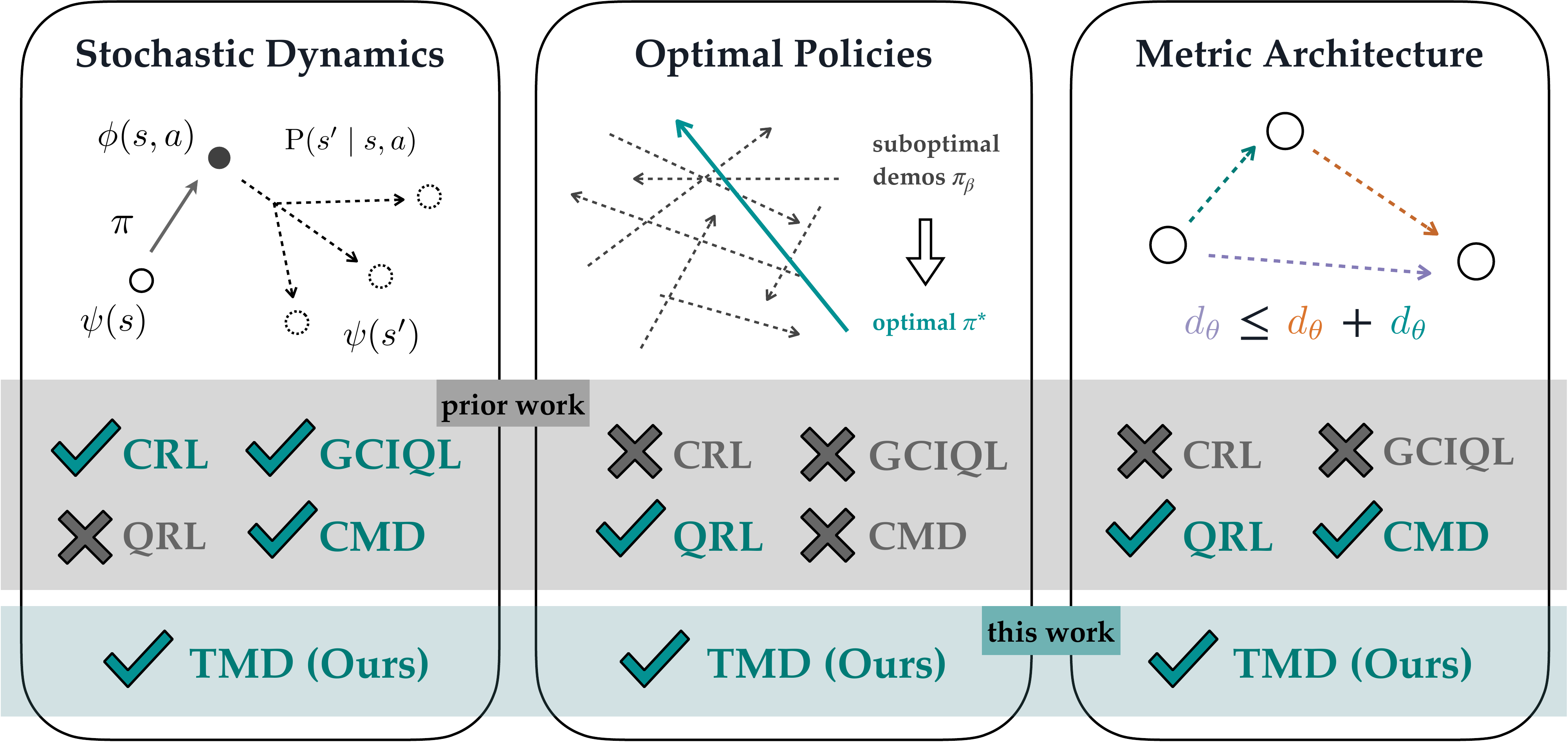}
    \caption{
        TMD enables key capabilities over prior work: \figleft handling stochastic transition dynamics,
    \figcenter learning optimal policies from offline data, and \figright stitching behaviors as a property of network architecture.%
    }
    \label{fig:comparisons}
\end{figure}

Our key contribution is a method for learning optimal goal-reaching distances that combines the long-horizon, probabilistic inference of Monte Carlo temporal distances with the optimality and stitching capabilities of \gls{quasimetric} architectures.
We use a \gls{quasimetric} architecture that imposes the triangle inequality as an architectural constraint, combined with two additional invariance properties that apply at the \emph{transition} level.
When these invariances are enforced as constraints on features learned with Monte Carlo estimation, they impose a structure roughly similar to the Bellman optimality equations across the space of goal-conditioned (Kroenecker Delta) reward functions.

We translate these invariance properties into a practical method for \gls{gcrl} that we call \Method{}.
To the best of our knowledge, \Method{} is the first \gls{gcrl} method that uses a \gls{quasimetric} value parameterization to implicitly stitch behaviors, while also learning optimal policies in stochastic settings with suboptimal data.
On benchmark tasks of up to 21-dimensions as well as visual observations, we demonstrate that our method achieves results that considerably outperforms that of similar baselines. Additional experiments reveal the importance of the enforced invariances and contrastive learning objective.
Given the importance of long-horizon reasoning in many potential applications of \gls{rl} today, we believe our work is useful for thinking about how to learn optimal temporal distances.

\section{Related Work}
\label{sec:related_work}

Our method provides a unifying framework connecting temporal distance learning to (optimal) offline \gls{gcrl}.
The resulting method gets the benefits of both, learning optimal goal-reaching policies from offline data with stochastic dynamics, and using a \gls{quasimetric} architecture to optimally stitch together behaviors without the compounding errors of \gls{td} learning.

\subsection{Temporal Distances}
We build on prior approaches to learning \emph{temporal} distances, which reflect the reachability of states~\citep{myers2024learning}.
Temporal distances are usually defined as the expected number of time steps to transit from one state to another~\citep{bae2024tldr,tian2021modelbased}.
Recent work has provided probabilistic definitions that are also compatible with continuous state spaces and stochastic transition dynamics~\citep{myers2024learning}.
A key consideration when thinking about temporal distance is \emph{which} policy they reflect: is this an estimate of the number of time steps under our current policy or under the optimal policy? We will use \emph{optimal temporal distance} to mean the temporal distance under the optimal (distance-minimizing) policy.

Algorithmically, this choice is often reflected in the algorithm one uses for learning temporal distances.
Methods based on Q-learning typically estimate optimal temporal distances~\citep{andrychowicz2017hindsight,fujimoto2018addressinga,kaelbling1993learning}, and are often structurally similar to popular actor-critic methods.
Some quasimetric methods also learn optimal temporal distances in deterministic MDPs by enforcing the triangle inequality as an architectural constraint, effectively computing shortest paths in a directed graph~\citep{liu2023metric,wang2023optimal}.
Prior work has shown that temporal distance learning can be important for finding paths that are better than those demonstrated in the data, and can enable significantly more data efficient learning~\citep{zheng2023contrastive} (akin to standard results in the theory of Q-learning~\citep{agarwal2019reinforcement}).

Methods based on Monte Carlo learning typically operate by sampling pairs of states that occur nearby in time (though not necessarily temporally-adjacent); distances are minimized for such positive pairs, and maximize for pairs of states that appear on different trajectories~\citep{hartikainen2020dynamical,eysenbach2022contrastive}.
These Monte Carlo methods often estimate the temporal distance corresponding to the policy that collected the data.
Methods for goal-conditioned behavioral cloning~\citep{ghosh2021learning,myers2023goal}, though not directly estimating temporal distances, are effectively working with this same behavioral temporal distance~\citep{ghugare2024closing}.
Despite the fact that Monte Carlo methods do not estimate optimal temporal distances, they often outperform their Q-learning counterparts, suggesting that it is at least unclear whether the errors from learning the behavioral (rather than optimal) temporal distance are larger or smaller than those introduced by \gls{td} learning's compounding errors.
Our work bridges these two notions of temporal distance, providing a method that learns optimal temporal distances while reducing the reliance on \gls{td} learning to propagate values (and accumulate errors).

\subsection{Offline Reinforcement Learning}

\newglossaryentry{qbeta}{%
    name={%
        $Q^\beta$
    },
    description={%
        The behavioral $Q$-function under policy \gls{pibeta}%
    }
}

Our investigation into temporal distances closely mirrors discussions in the offline \gls{rl} literature about 2-step \gls{rl} methods~\citep{eysenbach2023connection}, which often use Monte Carlo value estimation, versus multi-step \gls{rl} methods~\citep{schulman2018highdimensional}, which often use Q-learning value estimation.
These 1-step \gls{rl} methods avoid the compounding errors of Q-learning, yet are limited by their capacity to learn \gls{Qopt} rather than \gls{qbeta}.
However, their strong performance over the years~\citep{laidlaw2024effective,eysenbach2022contrastive} suggests it is an open question whether the compounding errors of Q-learning outweigh the benefits of learning the behavioral value function, rather than the value function of the optimal policy.

\section{Temporal Metric Distillation (TMD)}
\label{sec:tmd}

In this section, we formally define \Method{} in terms of the invariances it must enforce to recover optimal distances, and by extension, the optimal policy.
In \cref{sec:impl} we will then show how these invariances can be converted into losses which can be optimized with a \gls{quasimetric} architecture that enforces the triangle inequality.

\newglossaryentry{Qopt}{
    name={\ensuremath{
            Q^{*}_{g}(s, a)
    }},
    description={%
        the optimal goal-conditioned Q-function for reaching goal $g$%
    },
}

\newglossaryentry{Vopt}{
    name={\ensuremath{
            V^{*}_{g}(s)
    }},
    description={%
        the optimal goal-conditioned value function for reaching goal $g$%
    }
}

\newglossaryentry{cmp}{
    name={\ensuremath{\mathbf{M}}},
    description={%
        a controlled Markov process with state space $\cS$, action space $\cA$, and dynamics $\p(s' \mid s, a)$%
    },
}
\newglossaryentry{Pi}{
    name={\ensuremath{
            \Pi
    }},
    description={%
        All policies $\pi(a \mid s)$ mapping states to distributions over actions%
    },
}

\subsection{Notation}
We consider a controlled Markov process \gls{cmp} with state space $\cS$, action space $\cA$, and dynamics $\p(s' \mid s, a)$.
The agent interacts with the environment by selecting actions according to a policy $\pi(a \mid s)$, i.e., a mapping from $\cS$ to distributions over  $\cA$.
We further assume the state and action spaces are compact.

\newglossaryentry{sfut}{
    name={\ensuremath{
            \sp_t
    }},
    description={%
        the state at a random future time step $t+K$ where $K \sim \operatorname{Geom}(1-\gamma)$%
    },
}

\newglossaryentry{rs}{
    name={\ensuremath{
            \rs_t
    }},
    description={%
        the state at time step $t$ (random variable)%
    },
}

\newglossaryentry{ra}{
    name={\ensuremath{
            \ra_t
    }},
    description={%
        the action at time step $t$ (random variable)%
    }
}

\newglossaryentry{bell}{
    name={\ensuremath{
            \mathcal{T}
    }},
    description={%
        the backup operator defined in \cref{eq:bellman_operator_exp}%
    }
}
\def\bell{\gls{bell}}

\newglossaryentry{path}{
    name={\ensuremath{
            \mathcal{P}
    }},
    description={%
        the path relaxation operator defined in \cref{eq:path_relaxation}%
    }
}

\newglossaryentry{act}{
    name={\ensuremath{
            \mathcal{I}
    }},
    description={%
        the action invariance operator defined in \cref{eq:action_relaxation}%
    }
}
\def\act{\gls{act}}

\def\cS{\gls{cS}}
\def\cA{\gls{cA}}

\newglossaryentry{cS}{
    name={\ensuremath{
            \mathcal{S}
    }},
    description={%
        the state space%
    },
}

\newglossaryentry{cA}{
    name={\ensuremath{
            \mathcal{A}
    }},
    description={%
        the action space%
    },
}

Policies $\pi \in \gls{Pi}$ are defined as distributions $\pi(a \mid s)$ for $s \in \cS, a \in \cA$.
When applicable, for a fixed policy $\pi$, we can denote the state and action at step $t$ as random variables $\gls{rs}$ and $\gls{ra}$, respectively.
We will also use the shorthand
\begin{equation}
    \gls{sfut} \triangleq \rs_{t+K} \text{ for } K \sim \operatorname{Geom}(1-\gamma).
\end{equation}

\newglossaryentry{cD}{
    name={\ensuremath{
            \cD
    }},
    description={%
        the set of all distances over states $\cS$%
    }
}

\newglossaryentry{cQ}{
    name={\ensuremath{
            \cQ
    }},
    description={%
        the set of all quasimetrics over states $\cS$%
    }
}

We equip $\gls{cmp}$ with an additional notion of \textit{distances} between states. At the most basic level, a distance $\cS \times \cS \to \bR$ must be non-negative and equal zero only when passed two identical states. We will denote the set of all distances as $\cD$, defined as
\[
    \gls{cD} \triangleq \{d: \cS \times \cS \to \bR : d(s, s) = 0, d(s, s') \ge 0 \text{ for each } s,s' \in \cS\}.
\]

\newglossaryentry{quasimetric}{%
    name={%
        quasimetric
    },
    description={%
        a distance satisfying the triangle inequality (see \cref{eq:quasimetric})%
    }
}

A desirable property for distances to satisfy is the triangle inequality, which states that the distance between two states is no greater than the sum of the distances between the states and a waypoint \citep{wang2023optimal}.
A distance satisfying this property is known as a \textit{\gls{quasimetric}}.
Formally, we construct \begin{equation}
    \label{eq:quasimetric}
    \gls{cQ} \triangleq \{d \in \cD : d(s, g) \le d(s, w) + d(w, g) \text{ for all } s, g, w \in \cS\}.
\end{equation}

If we further restrict distances to be symmetric ($d(x,y)=d(y,x)$), we obtain the set of traditional metrics over $\cS$.

\subsection{\Method{} Operators}
\label{sec:tmd_operators}

\newglossaryentry{action_invariance}{%
    name={%
        action invariance
    },
    description={%
        the property that the distance between a state and a state-action pair with that state is zero, $d(s, (s,a)) = 0$ for all $s \in \cS, a \in \cA$%
    }
}

\Method{} learns a distance parameterization that is made to satisfy two constraints:
(i) the \emph{triangle inequality},
\begin{equation}
    d(x,z ) \leq d(x,y) + d(y,z) \text{ for any } x, y, z \in \cS \times \cA \cup \cS,
    \label{eq:triangle_inequality}
\end{equation} and
(ii) \emph{\gls{action_invariance}},
\begin{equation}
    d\bigl(s, (s, a)\bigr) = 0 \text{ for any } s \in \cS \text{ and } a \in \cA.
    \label{eq:action_invariance}
\end{equation}

We will show that to ensure that we recover the optimal distance $d_{SD}$~\citep{myers2024learning} given the learned (backward NCE) contrastive critic distance, the missing additional constraint is a form of consistency over the environment dynamics with respect to the expected (exponentiated) distances.
This constraint resembles the ``SARSA''-style Bellman consistency, which backs up values by averaging over dynamics to learn on-policy values.
So, what \Method{} is doing with these additional constraints is weakening the form of Bellman consistency that is required to recover the optimal distance from the standard $\max_{a \in \cA}$ Bellman operator to the weaker on-policy SARSA Bellman operator.
\Method{} thus turns on-policy SARSA into an off-policy algorithm through the metric constraints.

We can define this additional constraint as the fixed point of the following operator:
\begin{equation}
    \gls{bell}(d) (x, y) = \begin{cases}
        -\log \E_{\p(s' | s, a)} \bigl[e^{-d(s',y)}\bigr] - \log \gamma
                & \text{ if }x = (s, a) \in \cS \times \cA, \\
        d(x, y) & \text{ otherwise.}
        \label{eq:bellman_operator_exp}
    \end{cases}
\end{equation}
The triangle inequality \cref{eq:triangle_inequality} and \gls{action_invariance} \cref{eq:action_invariance} properties can also be written in terms of operator fixed points:
\begin{align}
    \gls{path} (d)(x, z) & \triangleq \min_{y\in\cS} \bigl[ d(x, y) + d(y, z) \bigr] \label{eq:path_relaxation} \\
    \act d(s, x)  & \triangleq
    \begin{cases}
        0            & \text{ if }x = (s, a) \\
        \cI(d)(s, x) & \text{ otherwise.}
    \end{cases}
    \label{eq:action_relaxation}
\end{align}

\subsection{Properties of path relaxation}

Path relaxation $\gls{path}$~\citep{myers2025horizon} (\refas{Eq.}{eq:path_relaxation}) enforces invariance to the triangle inequality, i.e., $\gls{path}(d) = d$ if and only if $d \in \gls{cQ}$.

\makerestatable
\begin{theorem}
    \label{rem:convergence}
    Take $d \in \gls{cD}$ and consider the sequence \[
        d_{n} = \path^n(d).
    \] Then, $d_n$ converges uniformly to a fixed point $d_{\infty} \in \cQ$.
\end{theorem}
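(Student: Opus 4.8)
The plan is to exploit that $\path$ is a \emph{decreasing} operator on $\cD$ and invoke monotone convergence, then upgrade pointwise to uniform convergence using compactness. First I would observe that for any $d \in \cD$ and any $x,z \in \cS$, taking the waypoint $y = x$ in $\path(d)(x,z) = \min_{y\in\cS}[d(x,y)+d(y,z)]$ gives $d(x,x)+d(x,z) = d(x,z)$, so $\path(d) \le d$ pointwise; the same feasible choice shows $\path$ maps $\cD$ into $\cD$ (non-negativity is preserved as a min of sums of non-negative terms, and the diagonal stays zero). Consequently the iterates $d_n = \path^n(d)$ form a pointwise non-increasing sequence bounded below by $0$, so by the monotone convergence theorem they converge pointwise to some $d_\infty$ with $d_\infty \ge 0$ and $d_\infty(x,x)=0$.

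Next I would show the pointwise limit already lies in $\cQ$, which sidesteps any need to commute $\path$ with the limit directly. For any fixed waypoint $w$ and any $n$, the definition of the minimum gives $d_{n+1}(x,z) \le d_n(x,w) + d_n(w,z)$. Sending $n \to \infty$ on both sides, the left side tends to $d_\infty(x,z)$ and the right to $d_\infty(x,w)+d_\infty(w,z)$, which yields the triangle inequality $d_\infty(x,z) \le d_\infty(x,w)+d_\infty(w,z)$ for all $x,w,z$. Hence $d_\infty \in \cQ$, and by the stated characterization $\path(d)=d$ iff $d \in \cQ$, the limit is the claimed fixed point.

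It remains to strengthen pointwise to uniform convergence, and here I would appeal to Dini's theorem on the compact space $\cS \times \cS$. This requires continuity of each $d_n$ and of $d_\infty$. Continuity is preserved by $\path$: since $\cS$ is compact and $(x,y,z)\mapsto d(x,y)+d(y,z)$ is continuous, the parametric minimum $\path(d)$ is continuous by the maximum theorem, so by induction every $d_n$ is continuous once $d$ is. For the limit, I would combine the triangle inequality just established with $d_\infty(x,x') \le d(x,x') \to 0$ as $x' \to x$ (continuity of $d$ on the diagonal) to bound $|d_\infty(x,z)-d_\infty(x',z')|$ by $d(x,x')+d(z,z')$ up to order of arguments, giving continuity of $d_\infty$. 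A monotone sequence of continuous functions converging pointwise to a continuous limit on a compact set then yields uniform convergence by Dini.

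The main obstacle is the uniform-convergence step rather than the identification of the fixed point: monotone pointwise convergence is automatic, but uniformity is false in general for decreasing sequences unless the limit is continuous, so the crux is establishing continuity of $d_\infty$ (via the triangle inequality and regularity of $d$ near the diagonal) before Dini can be applied. I would also flag that the argument uses continuity of the input $d$; this is harmless in our setting, where the learned/parameterized distances are continuous, but it is the one regularity assumption the uniform claim genuinely needs beyond $d \in \cD$.
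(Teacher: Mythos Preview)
Your argument is correct and shares its skeleton with the paper's proof---both use the monotone decrease $\path(d) \le d$, pointwise convergence via the monotone convergence theorem, and Dini's theorem on the compact domain for uniformity---but the two diverge on how $d_\infty$ is shown to lie in $\cQ$. The paper first invokes Dini to get $d_n \to d_\infty$ in $L^\infty$, and then argues by contradiction using $L^\infty$-continuity of $\path$ together with a Hausdorff separation argument: if $\path d_\infty \neq d_\infty$, one can trap consecutive iterates $d_k$ and $d_{k+1}$ in disjoint neighborhoods of $d_\infty$ and $\path d_\infty$, which is impossible. You instead pass to the limit directly in the one-step inequality $d_{n+1}(x,z) \le d_n(x,w) + d_n(w,z)$ to obtain the triangle inequality for $d_\infty$, and only afterward apply Dini. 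Your route is more elementary (it does not need the $L^\infty$-continuity lemma or the topological fixed-point argument) and also more careful about regularity: you correctly note that Dini requires continuity of both the iterates and the limit, supply these explicitly (via the maximum theorem for $d_n$, and via the just-established triangle inequality plus continuity of $d$ on the diagonal for $d_\infty$), and flag the implicit continuity assumption on the input $d$---a hypothesis the paper's invocation of Dini relies on but does not state.
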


\newglossaryentry{pathproj}{
    name={\ensuremath{
            \pathproj
    }},
    description={%
        the projection operator onto the set of quasimetrics $\cQ$, defined as the fixed point of ${path}$%
    },
}

In light of \cref{rem:convergence} we denote by $\pathproj = \lim_{n \to \infty} \path^n$ the fixed point operator of $\path$, and note that $\gls{pathproj}$ is in fact a projection operator onto $\cQ$.

Proofs of \cref{rem:monotonicty,rem:continuity,lem:quasimetric_fixed} and \cref{rem:convergence} can be found in \cref{app:operators}.

\subsection{The modified successor distance}
\label{sec:successor_distance}

\newglossaryentry{dsdpi}{
    name={\ensuremath{d_{\textsc{sd}}^{\pi}}},
    description={%
        the modified successor distance under policy $\pi$, defined in \cref{eq:successor_distance}%
    },
}

\newglossaryentry{dsdstar}{%
    name={%
        \ensuremath{d_{\textsc{sd}}^{*}}%
    }, description={%
        the optimal successor distance, defined in \cref{eq:optimal_successor_distance}%
    }%
}
\def\dsdstar{\gls{dsdstar}}

The \emph{modified successor distance} $\gls{dsdpi} \in D$ can be defined by~\citep{myers2024learning}:
\arraystretch{1.5}
\begin{equation}
    \gls{dsdpi}(x, y) \triangleq
    \begin{cases}
        0                                                                                             & \text{ if  } x = y,                                       \\
        -\log p^{\pi} \bigl(\frac{ \p(\sp = g \mid \rs_0 =s, \ra_0=a) }{\p(\sp = g \mid \rs_0 =g)}\bigr) \text{ for } K\sim \gamma             & \text{ if  } x = (s, a) \in \cS \times \cA, y = g \in \cS \\[2pt]
        -\log \mathbb{E}_{\pi(a \mid s)} \bigl[e^{-\gls{dsdpi}((s,a), g)} \bigr] - \log \gamma & \text{ if  } x = s \in \cS, x \neq y                      \\[1pt]
        \gls{dsdpi}(s, g) - \log \pi(a \mid g)                                                         & \text{ if  } y = (g,a) \in \cS \times \cA.
    \end{cases}
    \label{eq:successor_distance}
\end{equation}

The \emph{optimal successor distance} $\gls{dsdstar}$ can then be stated as
\begin{equation}
    \gls{dsdstar}(x, y) \triangleq \min_{\pi \in \Pi} \gls{dsdpi}(x, y).
    \label{eq:optimal_successor_distance}
\end{equation}

This distance is useful since it lets us recover optimal goal-reaching policies.
For any $s,g \in \cS, a \in \cA$, the distance is proportional to the optimal goal-reaching value function
\begin{equation}
    \gls{dsdstar}((s,a),g) \propto_a - Q_g^{*}(s, a)
    \label{eq:q_star}
\end{equation}
where \gls{Qopt} is defined as the standard optimal $Q$-function for reaching goal $g$~\citep{eysenbach2022contrastive}:
\begin{equation}
    \gls{Qopt} \triangleq \max_{\pi \in \Pi}\E_{\{\rs_{i}, \ra_{i}\} \sim \pi} \Bigl[ \sum_{t=0}^{\infty}
    \gamma^{t} \p(\rs_{t} = g \mid \rs_{0} = s, \ra_{0} = a) \Bigr].
\end{equation}
and
\begin{equation}
    \gls{Vopt} \triangleq \max_{a \in \cA} Q^{*}_{g}(s, a).
\end{equation}

In fact, we can equivalently define $\gls{dsdstar}\bigl((s,a),g\bigr)$ in terms of $Q^{*}$:
\begin{equation}
    \gls{dsdstar}\bigl((s,a),g\bigr) = \log V^{*}_g(g)  - \log Q^{*}_{g}(s, a).
    \label{eq:q_star_distance}
\end{equation}

\newglossaryentry{Cpi}{
    name={\ensuremath{
            \cC(\pi)
    }},
    description={%
        the outcome of running \gls{crl} with policy $\pi$, equivalent to $\dsd$ under suitable assumptions%
    }
}

Similar to \citet{myers2024learning}, we argue that contrastive learning can recover these distances, i.e.,
\begin{equation}
    \gls{Cpi} = \dsd
    \label{eq:crl}
\end{equation}
Then, through the operators in \cref{sec:tmd_operators}, we will extend this to the optimal distance $\gls{dsdstar}$.

\newglossaryentry{Dtilde}{
    name={\ensuremath{
            \widetilde{\cD}
    }},
    description={%
        the set of all realized successor distances $\gls{dsdpi}$ under policies $\pi \in \Pi$%
    }
}

For convenience, we also define the set of realized successor distances
\begin{equation}
    \gls{Dtilde}\triangleq \{\gls{dsdpi} : \pi \in \Pi\}.
\end{equation}
Note that $\widetilde{\cD}$ does not necessarily contain the optimal distance $\gls{dsdstar}$, as no single policy is generally optimal for reaching all goals.

\begin{remark}
    The optimal successor distance $\gls{dsdstar}$ satisfies
    \[
        \dsd(s,(s,a)) = 0 \text{ for all } s \in \cS \text{ and } a \in \cA.
    \]
    \label{rem:optimal_action_invariance}
\end{remark}

\subsection{Convergence to the optimal successor distance}
\label{sec:convergence}

\newglossaryentry{pibeta}{
    name={\ensuremath{
            \pi_{\beta}
    }},
    description={%
        the behavior policy used to collect the offline dataset%
    },
}
\def\pibeta{\gls{pibeta}}

Applying the invariances in \cref{sec:tmd_operators} to the contrastive distance \cref{eq:crl}, the \Method{} algorithm can be defined symbolically as
\begin{equation}
    \cmd(\pi) \triangleq (\pathproj \circ \gls{bell} \circ \act )^{\infty} \, \cC(\pi).
    \label{eq:tmd_algo}
\end{equation}
In other words, \Method{} computes the initial $\gls{pibeta}$ distance $\cC(\pi)$, and then enforces the invariance (architecturally or explicitly), as expressed with the iterative application of $\gls{bell} \circ \act$ followed by projection onto $\cQ$ by $\pathproj$.

\makerestatable
\begin{theorem}
    \label{thm:convergence}
    The \Method{} algorithm converges pointwise to the optimal successor distance $\gls{dsdstar}$ for any policy $\pi$ with full state and action coverage, i.e.,
    \begin{equation}
        \lim_{n \to \infty}( \gls{pathproj} \circ \bell \circ \act  )^{n} \, \cC(\pi) = \gls{dsdstar}.
    \end{equation}
\end{theorem}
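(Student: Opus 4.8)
The plan is to pass to the multiplicative representation $u = e^{-d}$, in which the three operators linearize and expose a hidden value-iteration structure. Under this change of variables, action invariance $\act$ becomes the boundary condition $u(s,(s,a)) = 1$; the operator $\bell$ becomes the \emph{linear} backup $u((s,a),g) \mapsto \gamma\,\E_{\p(s' \mid s,a)}[u(s',g)]$; and path relaxation $\path$ becomes the max-product relaxation $u(x,z)\mapsto \max_{y} u(x,y)\,u(y,z)$. The crucial observation is that composing these three recovers the Bellman \emph{optimality} backup: feeding the waypoint $(s,a)$ into the multiplicative triangle inequality turns the on-policy (SARSA) expectation into a maximization over actions, $u(s,g) = \max_a \gamma\,\E_{\p(s' \mid s,a)}[u(s',g)]$ with $u(g,g)=1$. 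This is exactly the normalized Bellman optimality equation for $V^{*}_g$, via $\dsd^{*}((s,a),g) = \log V^{*}_g(g) - \log Q^{*}_g(s,a)$ together with the identity $Q^{*}_g(s,a) = \gamma\,\E_{\p(s' \mid s,a)}[V^{*}_g(s')]$ that holds for $s\neq g$.

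First I would verify that each operator is monotone, so that $T \triangleq \pathproj\circ\bell\circ\act$ is monotone: $\act$ and $\bell$ are monotone entrywise, and $\path$ (hence $\pathproj$, by \cref{rem:convergence}) preserves order since it is a minimum of sums. Next I would check that $\dsd^{*}$ is a fixed point of $T$. Action invariance is \cref{rem:optimal_action_invariance}; the consistency $\bell(\dsd^{*}) = \dsd^{*}$ on state--action pairs follows from the $Q^{*}_g(s,a) = \gamma\,\E_{s'}[V^{*}_g(s')]$ identity; and $\dsd^{*}\in\cQ$ because optimal temporal distances obey the triangle inequality, so $\pathproj(\dsd^{*}) = \dsd^{*}$.

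With monotonicity and the fixed point in hand, I would sandwich the iterates $d_n \triangleq T^n\,\cC(\pi)$. Since $\dsd^{*} = \min_{\pi}\dsd^{\pi} \le \cC(\pi) = \dsd^{\pi} = d_0$ and $T(\dsd^{*})=\dsd^{*}$, monotonicity gives $\dsd^{*}\le d_n$ for every $n$. For the other side I would show $T(d_0)\le d_0$, so the sequence is nonincreasing: $\act$ lowers the strictly positive entries $\dsd^{\pi}(s,(s,a)) = -\log\pi(a\mid s)$ to zero, $\bell$ fixes the behavioral state--action distances by the SARSA consistency $\dsd^{\pi}((s,a),g) = -\log\E_{s'}[e^{-\dsd^{\pi}(s',g)}] - \log\gamma$, and $\pathproj$ can only decrease distances. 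Monotone convergence then yields $d_n \downarrow d_{\infty}\ge\dsd^{*}$, and uniform convergence of $\path$ makes $T$ continuous, so $d_{\infty}$ is itself a fixed point of $T$.

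The hard part will be the final identification $d_{\infty} = \dsd^{*}$. Here I would read off the fixed-point equation in $u$-space and argue it collapses to the Bellman optimality equation. The delicate point is that $\pathproj$ optimizes over \emph{all} waypoint sequences, not merely first-step actions, so I must show (i) the two-step paths $s\to(s,a)\to g$ already realize $\max_a\gamma\,\E_{s'}[u_\infty(s',g)]$, and (ii) at the fixed point no longer path can strictly exceed this, else the multiplicative triangle inequality would be violated. Once $u_\infty(s,g)=\max_a\gamma\,\E_{s'}[u_\infty(s',g)]$ with $u_\infty(g,g)=1$ is pinned down, the standard $\gamma$-contraction of the Bellman optimality operator forces the unique solution $u_\infty = e^{-\dsd^{*}}$, i.e.\ $d_{\infty} = \dsd^{*}$; full state--action coverage of $\pi$ is what guarantees $d_0$ is finite everywhere and that this solution is unique.
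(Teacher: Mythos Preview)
Your plan is essentially the paper's own strategy, repackaged through the change of variables $u=e^{-d}$: establish that the iterates stay above $\dsd^{*}$, show they decrease monotonically, pass to a limit that is a fixed point by continuity, and then identify the fixed point via the Bellman-contraction argument (this last step is exactly the content of \cref{thm:fixed_point}). The multiplicative viewpoint is a nice way to see \emph{why} the composite operator recovers Bellman optimality, but it does not change the skeleton of the proof.

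One place to be careful: your step ``$\bell$ fixes the behavioral state--action distances by the SARSA consistency $\dsd^{\pi}((s,a),g)=-\log\E_{s'}[e^{-\dsd^{\pi}(s',g)}]-\log\gamma$'' is delicate under the paper's conventions; with the $-\log\gamma$ already baked into the state-to-goal case of \cref{eq:successor_distance} and the boundary contribution at $s'=g$, this identity does not hold on the nose, so $T(d_0)\le d_0$ may fail on entries with a state--action first argument. The paper sidesteps this by only tracking monotone decrease on the subdomain $\cX=\cS\times(\cS\cup\cS\times\cA)$, where $\bell$ acts as the identity and $\act,\pathproj$ can only shrink; convergence on the remaining entries then follows because $\bell$ is continuous and is determined by the restriction to $\cX$. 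Adopting that same restriction closes the gap without altering your overall argument.
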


\newglossaryentry{Dstarplus}{
    name={\ensuremath{
            \cD^{*}_{+}
    }},
    description={%
        the set of all distances that upper bound the optimal successor distance $\gls{dsdstar}$%
    }
}

Our approach for proving \cref{thm:convergence} will be to analyze the convergence properties of $(\gls{pathproj} \circ \gls{bell} \circ \act )$ over the space of ``suboptimal'' distances $\cD^{*}_{+}$, defined as
\begin{equation}
    \gls{Dstarplus} \triangleq \{d \in \cD : d(x,y) \ge \gls{dsdstar}(x,y) \text{ for all } x,y \in \cS \times \cA \cup \cS\}.
\end{equation}

Unfortunately, $(\gls{pathproj} \circ \gls{bell} \circ \act )$ is not a contraction on $\gls{Dstarplus}$, so we cannot directly apply the Banach fixed-point theorem as we would for the standard Bellman (optimality) operator.
Instead, we will show this operator induces a ``more aggressive'' form of tightening over $\gls{Dstarplus}$, which will allow us to prove convergence to $\gls{dsdstar}$. We start by showing that $\gls{dsdstar}$ is a fixed point of $(\gls{pathproj} \circ \gls{bell} \circ \act)$ in \cref{thm:fixed_point}.

\makerestatable
\begin{lemma}
    \label{thm:fixed_point}
    The optimal successor distance $\gls{dsdstar}$ is the unique fixed point of $\gls{pathproj} \circ \gls{bell} \circ \act$ on $\gls{Dstarplus}$.
\end{lemma}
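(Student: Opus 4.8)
The plan is to prove the two halves of the claim separately: that $\dsd^{*}$ is fixed by $\pathproj \circ \bell \circ \act$, and that no other element of $\cD^{*}_{+}$ is. The first half is a direct verification operator-by-operator, while the second is where the real work lies, since (as noted) the composite is not a contraction and I will instead have to contract an \emph{induced} per-goal operator.

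For the fixed-point half, I would check each operator in turn. Action invariance is immediate from \cref{rem:optimal_action_invariance}: since $\dsd^{*}(s,(s,a)) = 0$, we have $\act(\dsd^{*}) = \dsd^{*}$. For $\bell$, the key identity is the occupancy Bellman equation $Q^{*}_{g}(s,a) = \gamma\,\E_{\p(s'\mid s,a)}[V^{*}_{g}(s')]$ for $s \neq g$. Writing distances in exponentiated ``value'' coordinates $v_{g}(s) \triangleq e^{-\dsd^{*}(s,g)} = V^{*}_{g}(s)/V^{*}_{g}(g)$ via \cref{eq:q_star_distance}, this identity is precisely $\bell(\dsd^{*})((s,a),g) = -\log\gamma\,\E[v_g(s')] = \dsd^{*}((s,a),g)$. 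Finally, because $\dsd^{*}$ is a quasimetric and thus lies in $\cQ$, it is fixed by $\pathproj$ (\cref{rem:convergence}). Composing gives $(\pathproj\circ\bell\circ\act)(\dsd^{*}) = \dsd^{*}$. The one place I expect to need care is the boundary $s = g$ (and the diagonal), where the immediate-occupancy term makes the naive Bellman identity fail by an additive constant; here I would argue consistency directly from the piecewise definition \cref{eq:successor_distance} together with $\dsd^{*}(x,x)=0$ and action invariance.

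For uniqueness, let $d \in \cD^{*}_{+}$ be any fixed point. I would first extract the structural consequences of $d = \pathproj(\bell(\act(d)))$. Because $d$ is in the image of $\pathproj$ it satisfies the triangle inequality ($d\in\cQ$); and because $\act$ sets the state-to-own-action entries to $0$ while path relaxation can only shorten and distances stay nonnegative, the fixed point inherits action invariance, $d(s,(s,a)) = 0$. Since $\pathproj$ only decreases distances, $d((s,a),g) \le \bell(\act(d))((s,a),g) = -\log\gamma\,\E[e^{-d(s',g)}]$, which in the coordinates $v_g(s) = e^{-d(s,g)}$ reads $q_g(s,a) \ge \gamma\,\E[v_g(s')]$; combining with $v_g(s) \ge e^{-d((s,a),g)} = q_g(s,a)$ (from action invariance and the triangle inequality) yields the super-solution inequality $v_g(s) \ge \max_{a}\gamma\,\E_{\p(s'\mid s,a)}[v_g(s')]$ for all $s \neq g$, with $v_g(g) = 1$.

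The crux is converting this super-solution property into $d = \dsd^{*}$. Although $\pathproj\circ\bell\circ\act$ is not a contraction, the induced per-goal operator $\mathcal{B}v(s) \triangleq \max_a \gamma\,\E_{\p(s'\mid s,a)}[v(s')]$ (for $s\neq g$, pinned to $1$ at $g$) is a $\gamma$-contraction whose unique fixed point is $v^{*}_g = V^{*}_g/V^{*}_g(g)$. Since $v_g \ge \mathcal{B}v_g$, monotonicity gives $v_g \ge \mathcal{B}^{n} v_g \to v^{*}_g$, hence $v_g \ge v^{*}_g$, i.e. $d(s,g) \le \dsd^{*}(s,g)$; the assumption $d\in\cD^{*}_{+}$ supplies the reverse inequality, so $d = \dsd^{*}$ on $\cS\times\cS$. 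I would then propagate equality to $\cS\times\cA\cup\cS$: the $((s,a),g)$ entries are pinned by $\bell$ once the state-state values agree, and the remaining $y\in\cS\times\cA$ entries follow from action invariance and the triangle inequality. I expect the main obstacle to be exactly this uniqueness direction — recognizing that the right object to contract is the exponentiated per-goal value $v_g$ rather than the distance itself, and carefully verifying that a fixed point of the composite operator is genuinely a super-solution of $\mathcal{B}$ on the extended state-action space (where the full state-action coverage hypothesis is what guarantees that every $(s,a)$ actually constrains $v_g$).
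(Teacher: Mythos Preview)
Your proposal is correct and follows essentially the same route as the paper: verify fixedness operator-by-operator for existence, then for uniqueness pass to exponentiated per-goal values, derive a Bellman super-solution inequality from the fact that $\pathproj$ only shortens, and combine the $\gamma$-contraction of the induced per-goal operator with the $\cD^{*}_{+}$ lower bound to force equality. The only cosmetic difference is that you contract the state-value map $v_g(s)=e^{-d(s,g)}$ with the operator pinned to $1$ at $g$, whereas the paper contracts the $Q$-value map $Q(s,a)=e^{-d((s,a),g)}$ against the indicator-reward Bellman operator; these are equivalent formulations and your monotone-iteration argument ($v_g \ge \mathcal{B}^n v_g \to v^*_g$) is in fact more explicit than the paper's.
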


Proofs are in \cref{app:theory}.

\section{Implementing TMD}
\label{sec:impl}

\newacronym{nce}{NCE}{Noise Contrastive Estimation}

We show that the backward NCE contrastive learning algorithm can recover an initial estimate of $\dsd^{\pibeta}$.
As justified by \cref{thm:convergence}, we can then enforce the invariances to recover the optimal distance \gls{dsdstar}.

The algorithm learns a distance $d_{\theta}$ parameterized by a \gls{quasimetric} neural network $\theta$ such as \gls{mrn}~\citep{liu2023metric}.
By construction, this distance is a \gls{quasimetric} that is invariant to $\cP$, i.e., $\cP d_{\theta} = d_{\theta}$.

\subsection{Initializing the Distance with Contrastive Learning}
\label{sec:contrastive_learning}

\newglossaryentry{Lnce}{%
    name={%
        \ensuremath{\cL_{\text{NCE}}}
    },
    description={%
        The backward NCE loss defined in \cref{eq:contrastive_loss}%
    }
}

\newglossaryentry{phi}{%
    name={%
        \ensuremath{\phi}
    },
    description={%
        learned state-action representation network%
    }
}

\newglossaryentry{psi}{%
    name={%
        \ensuremath{\psi}
    },
    description={%
        learned state representation network%
    }
}

Defining the critic
\[
    f(s,a,g) \triangleq -d_{\theta}\bigl((s,a),g\bigr),
\]
the core contrastive objective is the backward NCE loss:
\begin{equation}
    \gls{Lnce} \left( \gls{phi}, \gls{psi}; \{s_{i}, a_{i}, s_{i}', g_{i}\}_{i=1}^{N}  \right)  = \sum_{i=1}^{N} {
    \log \biggl(\frac{e^{f(s_{i},a_{i},g_{i}}) }{\sum_{j=1}^{N} e^{f(s_{j},a_{j},g_{i})}}\biggr)
    }
    \label{eq:contrastive_loss}
\end{equation}
which is enforced across batches of triplets $\{s_{i},a_{i},s_{i+k}\}_{i=1}^{N}$ for $k\sim \operatorname{Geom}(1-\gamma)$ sampled from the dataset generated by policy $\pibeta$.

The optimal solution to this objective is \begin{equation}
    f(s,a,g) = \log \Bigl(\frac{\p(\sp = g \mid \rs = s, \ra = a)}{\p(\sp = g) C(g)}\Bigr).
    \label{eq:optimal_f}
\end{equation} for some $C(g)$~\citep{ma2018noise}.

The parameterization $f(s,a,g) = -d_{\theta}\bigl((s,a),g\bigr)$ where $d_{\theta}$ is a quasimetric-enforcing parameterization (see~\citep{liu2023metric,wang2023optimal}) ensures that \[
    C(g) = \frac{\p(\sp = g \mid \rs = g)}{\p(\sp = g)},
\]

so the only valid \gls{quasimetric} satisfying \cref{eq:optimal_f} is $d_{\theta} = \dsd^{\pibeta}$.

Optimality of $\cL$ in \cref{eq:contrastive_loss} implies that the learned distance $d_{\theta} = \cC(\pibeta) = \dsd^{\pibeta}$.

The additional invariance constraints $\act$ and $\bell$ can be directly enforced by regressing $\|d_{\theta} - \act d_{\theta}\|_{\infty}$ and $\|d_{\theta} - \bell d_{\theta}\|_{\infty}$ to zero.
\Cref{thm:convergence} guarantees that if we can enforce those constraints and enforce invariance to $\cP$ by using a \gls{quasimetric} architecture (e.g., MRN~\citep{liu2023metric}), we can recover the optimal distance $\dsdstar$.

\newglossaryentry{dmrn}{%
    name={%
        \ensuremath{%
            d_{\textsc{mrn}}%
        }
    },
    description={%
        an ensemble version of the MRN~\citep{liu2023metric} \gls{quasimetric} parameterization defined in \cref{eq:mrndef}%
    }
}

\def\mrn{\gls{dmrn}}

In practice, we will directly enforce the constraints across the batches used in our contrastive loss. We will use the MRN parameterization for $d_{\theta}$ for $\theta=(\gls{psi},\gls{phi})$ on learned representations of states ($\gls{psi}$) and state-action pairs ($\gls{phi}$): \begin{gather*}
    \makebox[5cm][l]{$d_{\theta}\bigl(s,g\bigr)          \triangleq \mrn(\gls{psi}(s), \gls{psi}(g)) $} \qquad
    d_{\theta}\bigl((s,a),g\bigr)      \triangleq \mrn(\gls{phi}(s,a), \gls{psi}(g)) \nonumber     \\
    \makebox[5cm][l]{$d_{\theta}\bigl(s,(s,a)\bigr)      \triangleq \mrn(\gls{psi}(s), \gls{phi}(s,a))$} \qquad
    d_{\theta}\bigl((s,a),(s',a')\bigr)\triangleq \mrn(\gls{phi}(s,a), \gls{phi}(s',a')) \nonumber
\end{gather*}
where
\begin{equation}
    \mrn(x,y) \triangleq \frac{1}{K}\sum_{k=1}^{K} \max_{m=1 \ldots M} \max(0,x_{kM+m} - y_{kM+m})
    \label{eq:mrndef}
\end{equation}

\newglossaryentry{Lact}{%
    name={%
        \ensuremath{%
            \cL_{\mathcal{I}}%
        }
    },
    description={%
        The action invariance loss defined in \cref{eq:i-invariance_loss}%
    }
}

\subsection{Action Invariance ($\mathcal{I}$)}
\label{sec:i-invariance}
Invariance to the $\act$ backup operator in \cref{eq:action_relaxation} gives the following update across $s,a \in \cS \times \cA$
\begin{equation}
    d_{\theta}\bigl(\gls{psi}(s),\gls{phi}(s,a)\bigr) \gets 0 ,
    \label{eq:i-invariance_update}
\end{equation}
which can be directly enforced with the following loss across the batch:
\begin{equation}
    \gls{Lact} \left( \gls{phi},\gls{psi} ; \{s_{i}, a_{i}, s_{i}', g_{i}\}_{i=1}^{N} \right)  = \sum_{i,j=1}^{N} \mrn\bigl(\gls{psi}(s_{i}),\gls{phi}(s_{i},a_{j})\bigr).
    \label{eq:i-invariance_loss}
\end{equation}

\subsection{Temporal Invariance ($\mathcal{T}$)}
\label{sec:t-invariance}

Invariance to the $\mathcal{T}$ backup operator in \cref{eq:bellman_operator_exp} corresponds to the following update performed with respect to $\gls{phi}(s,a)$:
\begin{equation}
    e^{-\mrn(\gls{phi}(s,a),\gls{psi}(g)) }\gets \mathbb{E}_{\p(s'\mid s,a) }\bigl[e^{\log \gamma-\mrn(\gls{psi}(s'),\gls{psi}(g))}\bigr].
    \label{eq:t-invariance_update}
\end{equation}
This update is enforced by minimizing a divergence between the LHS and samples from the RHS expectation.
Classic approaches for backups in deep \gls{rl} include the $\ell_2$ distance to the target (RHS)~\citep{mnih2015humanlevel}, or when values can be interpreted as probabilities, a binary cross-entropy loss~\citep{kalashnikov2018scalable}.

\newglossaryentry{DT}{%
    name={%
        \ensuremath{%
            D_{T}%
        }
    },
    description={%
        The Bregman divergence defined in \cref{eq:itakura_saito}, analogous to the Linex loss~\citep{garg2023extreme,parsian2002estimation}%
    }
}

\newglossaryentry{Lt}{%
    name={%
        \ensuremath{\cL_{\mathcal{T}}}
    },
    description={%
        The $\cT$-invariance loss defined in \cref{eq:t-invariance_loss}%
    }
}

We use the following Bregman divergence~\citep{bregman1967relaxation}, which we find empirically is more stable for learning the update in \cref{eq:t-invariance_update} (c.f. the Itakura-Saito distance~\citep{itakura1968analysis} and Linex losses~\citep{garg2023extreme,parsian2002estimation}).
\begin{equation}
    \label{eq:itakura_saito}
    \Gls{DT}(d, d') \triangleq \exp(d-d') - d.
\end{equation}
We discuss this divergence and prove correctness in \cref{app:div}.
With the divergence, the $\cT$-invariance loss is:
\begin{equation}
    \gls{Lt} \left( \gls{phi}, \gls{psi}; \{s_{i}, a_{i}, s_{i}', g_{i}\}_{i=1}^{N} \right) =
    \sum_{i=1}^{N}\sum_{j=1}^{N} \Gls{DT}\bigl( \mrn(\gls{phi}(s_{i},a_{i}),\gls{psi}(g_{j})), \mrn(\gls{psi}(s_{i}'),\gls{psi}(g_{j})) - \log \gamma \bigr)
    \label{eq:t-invariance_loss}
\end{equation}
We minimize this loss only with respect to $\gls{phi}$, stopping the gradient through $\gls{psi}$.
This avoids the moving target that classically necessitates learning separate target networks in \gls{rl}~\citep{mnih2015humanlevel}.

\newglossaryentry{zeta}{%
    name={%
        \ensuremath{%
            \zeta%
        }
    },
    description={%
        weight of the invariance losses in the overall distance learning objective defined in \cref{eq:full_loss}%
    }
}

\subsection{The Overall Distance Learning Objective}
We can express the overall critic loss as:
\begin{align}
    \cL_{\text{TMD}}\bigl(\gls{phi},\gls{psi}; \overline{\gls{psi}}, \cB\bigr)
     & = \gls{Lnce}\left( \gls{phi},\gls{psi} ; \cB \right) + \gls{zeta} \Bigl( \gls{Lact} \left( \gls{phi},\gls{psi} ; \cB \right) + \gls{Lt} \left( \gls{phi}, \overline{\gls{psi}}; \cB \right) \Bigr)\label{eq:full_loss} \\
     & \qquad  \text{for batch} \quad \cB \sim \gls{pibeta}
    = \{s_{i}, a_{i}, s_{i}', g_{i}\}_{i=1}^{N} \nonumber
\end{align}
We minimize \cref{eq:full_loss} with respect to $\gls{phi}$ and $\gls{psi}$, where $\overline{\gls{psi}}$ is a separate copy of the representation network $\gls{psi}$ (stop-gradient).
Here, $\gls{zeta}$ controls the weight of the contrastive loss and invariance constraints, and batches are sampled \[
    \{s_{i},a_{i},s_{i}',g_{i}\}_{i=1}^{N} \sim \pibeta,
\]
for $s_{i}'$ the state following $s_{i}$, and $g_{i}$ the state $K$ steps ahead of $s_{i}$ for $K \sim \operatorname{Geom}(1-\gamma)$.
In theory, $\gls{zeta}^{-1}$ should be annealed between $1$ at the start of training (to extract the distance $\cC(\pi)$), toward $0$ at the end of training to enforce invariance to $(\bell \circ \act)$, though in practice we find it suffices to keep $\gls{zeta}$ constant in the environments we tested.

In practice, we pick $\gls{zeta}$ based on how much stitching and stochasticity we expect in the environment \-- when $\gls{zeta}$ is large, we more aggressively try and improve on the initial distance $\cC(\pibeta)$ describing the dataset policy $\pibeta$.

\subsection{Policy Extraction}
\label{sec:policy_extraction}
We finally extract the goal-conditioned policy $\pi(s,g): \cS^2\to \cA$ with the learned distance $d_{\theta}$:
\begin{equation}
    \min_{\pi} \mathbb{E}_{\{s_{i},a_{i},s_{i}',g_{i}\}_{i=1}^{N} \sim \pibeta} \Bigl[ \sum_{i,j=1}^{N} d_{\theta}\bigl((s_{i},\pi(s_{i},g_{j})),g_{j}\bigr) \Bigr].
    \label{eq:policy_extraction}
\end{equation}
For conservatism~\citep{kumar2020conservative}, we augment \cref{eq:policy_extraction} with a behavioral cloning loss against $\pibeta $via behavior-constrained deep deterministic policy gradient \citep{fujimoto2021minimalistapproachofflinereinforcement}.
Using additional goals $g_{i}$ sampled from the same trajectory as $s_{i}$ in \cref{eq:policy_extraction} could also be done through an extra tuned parameter (cf. \citet{bortkiewicz2025accelerating,park2025ogbench}).
Denoting these hyperparameters as $\lambda$ and $\alpha$ respectively, the overall policy extraction objective is:
\begin{gather}
        \min_\pi\, \mathbb{E}_{\{s_{i},a_{i},s_{i}',g_{i}\}_{i=1}^{N}\sim \pibeta}
        \bigl[\cL_{\pi}\bigl(\pi;\gls{phi},\gls{psi},\{s_{i}, a_{i}, s_{i}', g_{i}\}_{i=1}^{N}\bigr)\bigr] \\
        \cL_{\pi} \triangleq
        \sum_{i,j=1}^{N} (1-\lambda)\mrn\bigl(\gls{phi}(s_{i},\hat{a}_{ij}),\gls{psi}(g_{j})),\gls{psi}(g_{j})\bigr) + \lambda
        \mrn\bigl(\gls{phi}(s_{i},\hat{a}_{ii}),\gls{psi}(g_{i})\bigr) + \alpha \bigl\|
        \hat{a}_{ii} - a_{i}\bigr\|_2^2 \nonumber\\
        \text{where } \hat{a}_{ij} = \pi(s_{i},g_{j}).
\end{gather}
Prior offline \gls{rl} methods use similar $\alpha$ and $\lambda$ hyperparameters, which must be tuned per environment~\citep{park2025ogbench}.

\begin{table}
    \def\pmformat#1#2{\fontsize{7}{7}\selectfont$\text{#1}^{\fontsize{5}{4}\selectfont(\pm\text{#2})}$}
    \centering

    \newcolumntype{v}{c}
    \newcolumntype{g}{>{\small}c}
    \newcolumntype{T}{>{\raggedright\ttfamily\fontsize{8}{7}\selectfont\arraybackslash}l}

    \begin{adjustbox}{max width=\textwidth}
        \begin{threeparttable}
            \centering
            \caption{OGBench Evaluation}
            \label{tab:ogbench}
            \def\arraystretch{.9}
            \def\best{\bf\color{text2}}
            \begin{tabular}{Tccccccc}
                \toprule
                \multicolumn{1}{c}{}                  & \multicolumn{6}{c}{\small Methods} \\ [-1.5pt]
                \cmidrule(lr){2-8}
                \multicolumn{1}{c}{\small Dataset}    & \multicolumn{1}{v}{\bf \Method{}}   & \multicolumn{1}{g}{{\acrshort{cmd}}} & \multicolumn{1}{g}{{\acrshort{crl}}} & \multicolumn{1}{g}{{\acrshort{qrl}}} & \multicolumn{1}{v}{{\acrshort{gcbc}}} & \multicolumn{1}{g}{{\acrshort{gciql}}} & \multicolumn{1}{g}{{GCIVL}} \\
                \midrule
                \verb.humanoidmaze_medium_navigate.   & \best \pmformat{64.6}{1.1}          & \pmformat{61.1}{1.6}                 & \pmformat{59.9}{1.3}                 & \pmformat{21.4}{2.9}                 & \pmformat{7.6}{0.6}                   & \pmformat{27.3}{0.9}                   & \pmformat{24.0}{0.8}        \\
                \verb.humanoidmaze_medium_stitch.     & \best \pmformat{68.5}{1.7}          & \best \pmformat{64.8}{3.7}           & \pmformat{36.2}{0.9}                 & \pmformat{18.0}{0.7}                 & \pmformat{29.0}{1.7}                  & \pmformat{12.1}{1.1}                   & \pmformat{12.3}{0.6}        \\
                \verb.humanoidmaze_large_stitch.      & \best \pmformat{23.0}{1.5}          & \pmformat{9.3}{0.7}                  & \pmformat{4.0}{0.2}                  & \pmformat{3.5}{0.5}                  & \pmformat{5.6}{1.0}                   & \pmformat{0.5}{0.1}                    & \pmformat{1.2}{0.2}         \\

                \verb.humanoidmaze_giant_navigate.    & \best \pmformat{9.2}{1.1}           & \pmformat{5.0}{0.8}                  & \pmformat{0.7}{0.1}                  & \pmformat{0.4}{0.1}                  & \pmformat{0.2}{0.0}                   & \pmformat{0.5}{0.1}                    & \pmformat{0.2}{0.1}         \\
                \verb.humanoidmaze_giant_stitch.      & \best \pmformat{6.3}{0.6}           & \pmformat{0.2}{0.1}                  & \pmformat{1.5}{0.5}                  & \pmformat{0.4}{0.1}                  & \pmformat{0.1}{0.0}                   & \pmformat{1.5}{0.1}                    & \pmformat{1.7}{0.1}         \\

                \verb.pointmaze_teleport_stitch.      & \pmformat{29.3}{2.2}                & \pmformat{15.7}{2.9}                 & \pmformat{4.1}{1.1}                  & \pmformat{8.6}{1.9}                  & \pmformat{31.5}{3.2}                  & \pmformat{25.2}{1.0}                   & \best \pmformat{44.4}{0.7}  \\

                \verb.antmaze_medium_navigate.        & \best \pmformat{93.6}{1.0}          & \pmformat{92.4}{0.9}                 & \best \pmformat{94.9}{0.5}           & \pmformat{87.9}{1.2}                 & \pmformat{29.0}{1.7}                  & \pmformat{12.1}{1.1}                   & \pmformat{12.3}{0.6}        \\
                \verb.antmaze_large_navigate.         & \best \pmformat{81.5}{1.7}          & \best \pmformat{84.1}{2.1}           & \best \pmformat{82.7}{1.4}           & \pmformat{74.6}{2.3}                 & \pmformat{24.0}{0.6}                  & \pmformat{34.2}{1.3}                   & \pmformat{15.7}{1.9}        \\
                \verb.antmaze_large_stitch.           & \best \pmformat{37.3}{2.7}          & \pmformat{29.0}{2.3}                 & \pmformat{10.8}{0.6}                 & \pmformat{18.4}{0.7}                 & \pmformat{3.4}{1.0}                   & \pmformat{7.5}{0.7}                    & \pmformat{18.5}{0.8}        \\
                \verb.antmaze_teleport_explore.       & \best \pmformat{49.6}{1.5}          & \pmformat{0.2}{0.1}                  & \pmformat{19.5}{0.8}                 & \pmformat{2.3}{0.7}                  & \pmformat{2.4}{0.4}                   & \pmformat{7.3}{1.2}                    & \pmformat{32.0}{0.6}        \\
                \verb.antmaze_giant_stitch.           & \best\pmformat{2.7}{0.6}            & \best\pmformat{2.0}{0.5}             & \pmformat{0.0}{0.0}                  & \pmformat{0.4}{0.2}                  & \pmformat{0.0}{0.0}                   & \pmformat{0.0}{0.0}                    & \pmformat{0.0}{0.0}         \\

                \verb.scene_noisy.                    & \pmformat{19.6}{1.7}                & \pmformat{4.0}{0.7}                  & \pmformat{1.2}{0.3}                  & \pmformat{9.1}{0.7}                  & \pmformat{1.2}{0.2}                   & \best \pmformat{25.9}{0.8}             & \best \pmformat{26.4}{1.7}  \\

                \midrule
                \verb.visual_antmaze_teleport_stitch. & \best \pmformat{38.5}{1.5}          & \best \pmformat{36.0}{2.1}           & \pmformat{31.7}{3.2}                 & \pmformat{1.4}{0.8}                  & \pmformat{31.8}{1.5}                  & \pmformat{1.0}{0.2}                    & \pmformat{1.4}{0.4}         \\
                \verb.visual_antmaze_large_stitch.    & \best \pmformat{26.6}{2.8}          & \pmformat{8.1}{1.3}                  & \pmformat{11.1}{1.3}                 & \pmformat{0.6}{0.3}                  & \best\pmformat{23.6}{1.4}             & \pmformat{0.1}{0.0}                    & \pmformat{0.8}{0.3}         \\
                \verb.visual_antmaze_giant_navigate.  & \pmformat{40.1}{2.6}                & \pmformat{37.3}{2.4}                 & \best \pmformat{47.2}{0.9}           & \pmformat{0.1}{0.1}                  & \pmformat{0.4}{0.1}                   & \pmformat{0.1}{0.2}                    & \pmformat{1.0}{0.4}         \\

                \verb.visual_cube_triple_noisy.       & \best \pmformat{17.7}{0.7}          & \pmformat{16.1}{0.7}                 & \pmformat{15.6}{0.6}                 & \pmformat{8.6}{2.1}                  & \pmformat{16.2}{0.7}                  & \pmformat{12.5}{0.6}                   & \best\pmformat{17.9}{0.5}   \\

                \bottomrule
            \end{tabular}

            \smallskip

            {%
                \footnotesize
                \begin{tablenotes}
                    \item We {\best bold} the best performance.
                        Success rate (\%) is presented with the standard error across six seeds. All datasets contain 5 separate tasks each. We record the aggregate across all 5 tasks.
                    \end{tablenotes}%
                }
            \end{threeparttable}
        \end{adjustbox}

    \end{table}
\section{Experiments}
\label{sec:experiments}

\newglossaryentry{ogbench}{%
    name={%
        OGBench
    },
    description={%
        A benchmark for offline goal-conditioned reinforcement learning~\citep{park2025ogbench}%
    }
}

In our experiments, we evaluate the performance of \Method{} on tasks from the \Gls{ogbench} benchmark~\citep{park2025ogbench}. We aim to answer the following questions:
\begin{enumerate}[topsep=0pt,leftmargin=*,partopsep=0pt,parsep=1pt]
    \item Do the invariance terms in \cref{eq:path_relaxation} improve performance quantitatively in offline \gls{rl} settings?
    \item Is the contrastive loss in \cref{eq:contrastive_loss} necessary to facilitate learning these tasks?
    \item What capabilities does \Method{} enable for compositional task learning?
\end{enumerate}

\subsection{Experimental Results}

\begin{wrapfigure}{R}{0.5\linewidth}
    \centering
    \includegraphics[width=\linewidth]{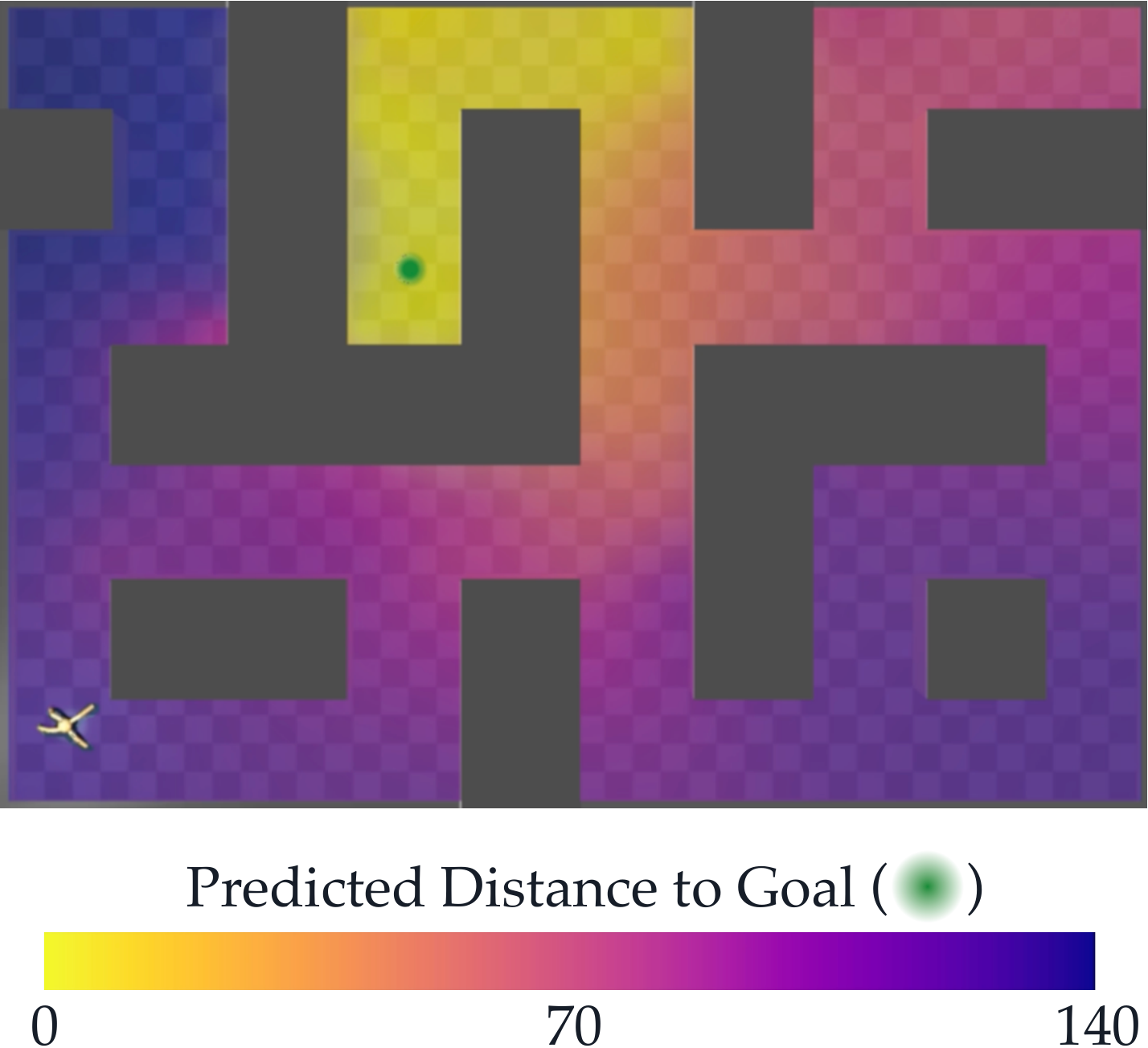}
    \caption{An example distance heatmap learned by \Method{} in \texttt{pointmaze\_large\_stitch}. Darker colors indicate larger distances.}
    \label{fig:pointmaze_dist}
\end{wrapfigure}

We evaluate \Method{} across evaluation tasks in \Gls{ogbench} for the environments and datasets listed in \cref{tab:ogbench}.
The experiments use 6 seeds in all environments, and report the success rates aggregated across the 5 evaluation tasks (goals) provided with each environment.
Of particular interest are the ``\texttt{teleport}'' and ``\texttt{stitch}'' environments, which respectively test the ability to handle stochasticity and composition.

We compare against the \gls{gcbc}, \gls{gciql}, \gls{gcivl}, \gls{crl}, and \gls{qrl} algorithms, using the reference results provided by \Gls{ogbench}~\citep{park2025ogbench}.
We implement and evaluate \gls{cmd}~\citep{myers2024learning}, which also learns a \gls{quasimetric} temporal distance, but does not enforce the constraint of $\mathcal{T}$ or $\mathcal{I}$ invariance and uses a separate critic architecture.
\gls{gcbc} uses imitation learning to learn a policy that follows the given trajectories within a dataset~\citep{ding2019goalconditioneda}.
\gls{crl}~\citep{eysenbach2022contrastive} performs policy improvement by fitting a value function via contrastive learning.
\gls{qrl}~\cite{wang2023optimal} learns a \gls{quasimetric} value function to recover optimal distances in deterministic settings.
\gls{gciql} and \gls{gcivl} use expectile regression to fit a value function \citep{kostrikov2021offlinereinforcementlearningimplicit}.

\Method{} consistently outperforms \gls{qrl} and \gls{crl} in the stitching environments.
In the stochastic \texttt{teleport} environments, \Method{} outperforms both \gls{crl} and \gls{qrl} by a considerable margin \--
in \texttt{pointmaze\_teleport\_stitch} \Method{} outperforms \gls{crl} and \gls{qrl} by over $3$x.
An example distance learned by \Method{} in \texttt{antmaze\_large\_stitch} is visualized as a heatmap in \cref{fig:pointmaze_dist}.

\subsection{Ablation Study}

\setbox1=\hbox{\includestandalone{figures/loss_ablation}}
\begin{wrapfigure}{R}{\wd1}
    \centering
    \includegraphics{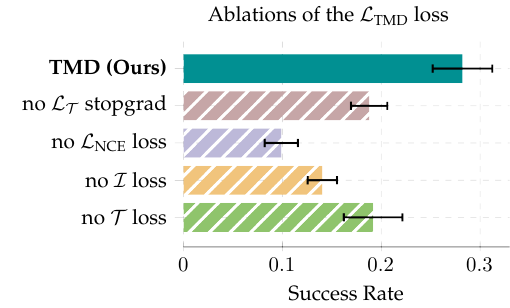}
    \caption{We ablate the loss components of \Method{} in the \texttt{pointmaze\_teleport\_stitch} environment.}
    \label{fig:loss_ablation}
\end{wrapfigure}
We perform an ablation study on the \texttt{pointmaze\_teleport\_stitch} environment to evaluate the importance of the invariance terms and the contrastive initialization loss in \Method{}. We separately disable the contrastive, $\mathcal{T}$ invariance, and $\mathcal{I}$ invariance component during training and observe its effects. We also examine the empirical effects of stopping gradients when calculating $\mathcal{L}_\mathcal{T}$. We log the corresponding success rate for each of the ablations in \ref{fig:loss_ablation}.

Our ablation studies answer questions 2 and 3, in which we demonstrate that by removing some of the invariances or removing the contrastive loss, the performance of \Method{} decreases to levels similar to \gls{crl} and \gls{qrl}.
Similarly, we see the importance of keeping the contrastive objective, as the performance of \Method{} degrades even more despite the presence of other loss components.
We also note the empirical performance of \Method{} is better when we stop gradients on $\mathcal{L}_\mathcal{T}$.
We provide further ablation details in \cref{app:abl}.

\section{Discussion}
\label{sec:conclusion}

In this work, we introduce Temporal Metric Distillation~(\Method{}), an offline goal-conditioned reinforcement learning method that learns representations which exploit the \gls{quasimetric} structure of temporal distances.
Our approach unifies \gls{quasimetric}, temporal-difference, and Monte Carlo learning approaches to \gls{gcrl} by enforcing a set of invariance properties on the learned distance function.
To the best of our knowledge, \Method{} is the first method that can exploit the \gls{quasimetric} structure of temporal distances to learn optimal policies from offline data, even in stochastic settings (see \cref{fig:comparisons}).
On a standard suite of offline \gls{gcrl} benchmarks, \Method{} outperforms prior methods, in particular on long-horizon tasks that require stitching together trajectories across noisy dynamics and visual observations.

\subsection{Limitations and Future Work}
\label{sec:limitations}

Future work could examine more principled ways to set the $\zeta$ parameter in our method, or if there are ways to more directly integrate the contrastive and invariance components of the loss function.
Future work could also explore integrating the policy extraction objective more directly into the distance learning to enable desirable properties (stitching through architecture, horizon generalization) at the level of the policy.
While we used the \gls{mrn}~\citep{liu2023metric} architecture in our experiments, alternative architectures such as \gls{iqe}~\citep{wang2022improved} that enforce the triangle inequality could be more expressive.
While the size of models studied in our experiments make them unlikely to pose any real-world risks, methods which implicitly enable long-horizon decision making could have unintended consequences or poor interpretability.
Future work should consider these implications.

\subsection*{Acknowledgements}

We would like to thank Seohong Park, Qiyang (Colin) Li, Catherine Ji, Cameron Allen, and Kyle Stachowicz for relevant discussions and feedback on this work.
This research was partly supported by AFOSR FA9550-22-1-0273, DARPA TIAMAT, and the DoD NDSEG fellowship.

\bibliographystyle{custom}
\bibliography{references}

\appendix
\section{Code}
\label{app:code}

\def\pmformatabl#1#2{\fontsize{9}{9}\selectfont$\text{#1}^{\fontsize{5}{4}\selectfont(\pm\text{#2})}$}

Code and videos can be found at \url{https://tmd-website.github.io/}.
The evaluation and base agent structure follows the \Gls{ogbench} codebase~\citep{park2025ogbench}.
The \Method{} agent is implemented in \url{https://github.com/vivekmyers/tmd-release/blob/master/impls/agents/tmd.py}.

\section{Analysis of \Method{}}
\label{app:theory}

This section provides the proofs of the results in \cref{sec:convergence}.
The main result is \cref{thm:convergence}, which shows that enforcing the \Method{} constraints on a learned \gls{quasimetric} distance recovers the optimal distance $\dsdstar$.

\restatetheorem{rem:convergence}

\begin{proof}
    From \cref{rem:monotonicty}, we have that $d_{n+1}(s, g) \le d_n(s, g)$ for all $s, g \in \cS$. Thus, the sequence $\{d_n\}$ is monotonically decreasing (and positive). By the monotone convergence theorem, the sequence converges pointwise to a limit $d_{\infty}$. Since $\cS$ is compact, by Dini's theorem \citep{wong1975theorem}, the convergence is uniform, i.e., $d_{n}\to d_\infty$ under the $L^{\infty}$ topology over $\cD$.

    To see that $d_\infty$ is a fixed point of $\path$, we note that if $\path d_\infty = d' \neq  d_\infty$, we can construct disjoint neighborhoods $N$ of  $d_\infty$ and $N'$ of  $d'$ (since  $L^\infty(\cD)$ is normed vector space and thus Hausdorff). By construction, the preimage $\path^{-1}(N')$ contains $d_\infty$ and is open by \cref{rem:continuity}. Thus, we can define another, smaller open neighborhood $N'' = N \cap \path^{-1}(N')$ of $d_\infty$. Now, since $d_n \to d_\infty$, there exists some $k$ so  $d_{k}, d_{k+1} \in N'' \subset N$. But then since $d_{k} \in  \path^{-1}(N')$, we have that $d_{k+1} \in N'$. This is a contradiction as $N$ and  $N'$ were disjoint by construction.

    Thus, we have that $d_\infty$ is a fixed point of $\path$.
    That $d_\infty \in \cQ$ follows from \cref{lem:quasimetric_fixed}.
\end{proof}

\restatetheorem{thm:convergence}
\begin{proof}[Proof of \cref{thm:convergence}]
    The initial distance $\cC(\pi) = \dsd^{\pi} \ge \dsdstar$ for any policy $\pi$. So, $\cC(\pi) \in \cD^{*}_{+}$.
    Define the sequence of distances $d_{n} = (\pathproj \circ \bell \circ \act)^{n} \cC(\pi)$ in $\cD^{*}_{+}$.
    Note that $\pathproj$ and $\act$ are monotone decreasing.
    So, the restriction $(d_{n})_{\!\raisebox{-1pt}{\mbox{|}}\!\mathcal{X}}$ is monotonically decreasing on the domain $\cX = \cS \times( \cS \cup \cS \times \cA)$, and thus converges pointwise on $\cX$ as $n \to \infty$.

    Since $\mathcal{T}$ and $\pathproj$ are continuous operators (\cref{rem:continuity,eq:bellman_operator_exp}), and $\mathcal{T}$ is fully-determined by the restriction to $\cX$, the sequence $(\cP \circ \cT)d_n = d_{n+1}$ converges pointwise on its full domain.
    The pointwise limit of $d_{n}$ is a fixed point of $(\pathproj \circ \bell \circ \act)$, which must be the unique fixed point $\dsdstar$ on $\cD^{*}_{+}$ by \cref{thm:fixed_point}.
\end{proof}

\restatetheorem{thm:fixed_point}

\begin{proof}[Proof of \cref{thm:fixed_point}]
    For existence, we note
    \begin{align}
        (\pathproj \circ \bell \circ \act) \dsdstar
         & = ( \pathproj \circ \bell) (\act\dsdstar) \tag{\cref{rem:optimal_action_invariance}} \nonumber \\
         & = (\pathproj \circ \bell) \dsdstar \tag{Bellman optimality of $Q_{g}^{*}$} \nonumber           \\
         & = \pathproj \dsdstar \nonumber \tag{\cref{lem:quasimetric_fixed}}.                             \\
         & = \dsdstar .
    \end{align}

    For uniqueness,
    we need to show that $(\pathproj \circ \bell \circ \act)$ has no fixed points besides $\dsdstar$ in $\cD^{*}_{+}$.
    Suppose there exists some $d \in \cD^{*}_{+}$ such that $(\pathproj \circ \bell \circ \act) d = d$.
    Then, we have for $x \in \cS \cup \cS \times \cA$, $s,g \in \cS$, and $a \in \cA$:
    \begin{align}
        (\pathproj \circ \bell \circ \act) d\bigl(x,(g,a)\bigr) & = d\bigl(x,(g,a)\bigr) = d(x,g)    .
        \label{eq:fixed_point_action_invariance_ga}
    \end{align}

    Denote by $Q(s,a) = e^{-d((s,a),g)}$, and let $\mathcal{B}$ be the goal-conditioned Bellman operator defined as
    \begin{equation}
        \mathcal{B}Q(s,a) \triangleq  \E_{\p(s' | s, a)}\bigl[ \1\{ s' = g\} + \gamma   Q(s',g) \bigr]
    \end{equation}

    At any fixed point $d \in \cD^{*}_{+}$, we have
    \begin{equation}
        (\pathproj \circ \bell \circ \act) d\bigl((s,a),g\bigr) = d\bigl((s,a),g\bigr)
        \label{eq:fixed_point_action_invariance_sa}
    \end{equation}
    This last expression implies that
    \begin{align}
        Q(s,a) & = \exp \bigl[-d\bigl((s,a),g\bigr)\bigr] \nonumber                                                       \\
               & = \exp\bigl[-(\pathproj \circ \bell \circ \mathcal{I}) d\bigl((s,a),g\bigr) \bigr]\nonumber              \\
               & \le  \E_{\p(s' | s, a)} \bigl[\min_{a'\in\cA}\exp d\bigl((s',a'), g\bigr) \bigr] - \log \gamma \nonumber \\
               & = \mathcal{B}Q(s, a).
        \label{eq:fixed_point_action_invariance_sa_2}
    \end{align}
    Since $\mathcal{B}$ is a contraction on the exponentiated distance space, and $d\bigl((s,a),g\bigr) \ge \dsdstar\bigl((s,a),g\bigr)$, \cref{eq:fixed_point_action_invariance_sa_2} is only consistent with $Q(s,a) = Q_{g}^{*}(s,a)$. This implies that
    \begin{equation}
        d\bigl((s,a),g\bigr) = \dsdstar\bigl((s,a),g\bigr).
        \label{eq:fixed_point_action_invariance_sa_3}
    \end{equation}
    We also know that at this fixed point, $d(s,(s,a)) = 0$, and thus from \cref{eq:fixed_point_action_invariance_sa_3} we have
    \begin{equation}
        d(s,g) = \dsdstar(s,g).
    \end{equation}
    So, $d = \dsdstar$ must be the unique fixed point of $(\pathproj \circ \bell \circ \act)$.

\end{proof}

\section{Path Relaxation and Quasimetric Distances}
\label{app:operators}
We provide short proofs of the claims in \cref{sec:tmd_operators}

\makerestatable
\begin{lemma}
    \label{lem:quasimetric_fixed}
    We have $\gls{path}(d) =d$ if and only if $d \in \cQ$.

\end{lemma}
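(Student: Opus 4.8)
The plan is to prove the two directions of the biconditional separately. Both follow directly from the definition $\path(d)(x,z) = \min_{y \in \cS}[d(x,y) + d(y,z)]$ together with the defining properties of membership in $\cD$ (non-negativity and $d(x,x)=0$), so the work reduces to bookkeeping a pair of inequalities in each direction.

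For the ``if'' direction, I would assume $d \in \cQ$ and establish $\path(d) = d$ via a two-sided inequality at an arbitrary pair $(x,z)$. For the upper bound $\path(d)(x,z) \le d(x,z)$, I would exhibit a single waypoint that already attains the value $d(x,z)$: choosing $y = x$ in the minimization and using $d(x,x) = 0$ gives $d(x,x) + d(x,z) = d(x,z)$, so the minimum over all $y$ is at most this. For the matching lower bound $\path(d)(x,z) \ge d(x,z)$, I would invoke the triangle inequality directly: since $d \in \cQ$, we have $d(x,z) \le d(x,y) + d(y,z)$ for \emph{every} $y \in \cS$, so $d(x,z)$ is a lower bound for the minimized quantity. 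Combining the two yields $\path(d)(x,z) = d(x,z)$ for all $x,z$, i.e., $\path(d) = d$.

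For the ``only if'' direction, I would assume $\path(d) = d$ and simply read the triangle inequality off the fixed-point identity. For any $s,w,g \in \cS$, the assumption gives $d(s,g) = \path(d)(s,g) = \min_{y}[d(s,y)+d(y,g)] \le d(s,w) + d(w,g)$, where the last step specializes the minimum to the waypoint $y = w$. Since $d \in \cD$ already supplies non-negativity and $d(s,s)=0$, this certifies $d \in \cQ$.

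The argument is elementary, so I do not anticipate a genuine obstacle so much as a single point requiring care: the upper bound in the ``if'' direction relies on $x$ itself being an admissible waypoint, so that the zero self-distance $d(x,x)=0$ can be used to recover $d(x,z)$ exactly. This is immediate here because the arguments and the minimization range over $\cS$; the same trick would need revisiting if one extended path relaxation to first arguments in $\cS \times \cA$, where a state-action pair is not an eligible waypoint. For the lemma as stated the diagonal-waypoint step is unobstructed, and the only conceptual content is that path relaxation is idempotent precisely on the triangle-inequality-closed set $\cQ$.
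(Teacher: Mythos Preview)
Your proof is correct and matches the paper's own argument essentially line for line: both directions are handled exactly as you describe, with the paper's $(\Leftarrow)$ direction invoking the separately stated monotonicity lemma $\path(d)(s,g)\le d(s,g)$ (proved via the waypoint $w=s$ and $d(s,s)=0$) in place of your inline derivation of the same inequality.
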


\begin{proof}
    \quad
    \begin{math}
        \begin{aligned}[t]
            \gls{path}(d)(s, g) & = \min_{w \in \cS} \bigl[d(s, w) + d(w, g)\bigr] \\
                           & \le d(s,s) + d(s, g)                             \\
                           & = d(s,g).
        \end{aligned}
    \end{math}
\end{proof}

\makerestatable
\begin{lemma}
    \label{rem:continuity}
    The path relaxation operator $\gls{path}$ is continuous with respect to the $L^{\infty}$ topology over $\cD$.
\end{lemma}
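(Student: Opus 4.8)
The plan is to show that $\path$ is Lipschitz continuous with respect to the $L^{\infty}$ norm on $\cD$, which immediately yields continuity. Concretely, I would establish the bound
\[
    \|\path(d) - \path(d')\|_{\infty} \le 2\,\|d - d'\|_{\infty} \qquad \text{for all } d, d' \in \cD,
\]
and then invoke the elementary fact that any Lipschitz map between normed spaces is continuous. This reduces the whole statement to one two-sided estimate.

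First I would fix an arbitrary pair $x, z \in \cS$ and write $\delta \triangleq \|d - d'\|_{\infty}$. Recalling $\path(d)(x,z) = \min_{y \in \cS}\bigl[d(x,y) + d(y,z)\bigr]$, let $y^{\star}$ be a waypoint achieving this minimum for $d$. Since $y^{\star}$ is one feasible choice for $d'$ as well,
\[
    \path(d')(x,z) \le d'(x, y^{\star}) + d'(y^{\star}, z) \le d(x,y^{\star}) + d(y^{\star}, z) + 2\delta = \path(d)(x,z) + 2\delta,
\]
where the middle step bounds each of the two terms separately by $\delta$. Swapping the roles of $d$ and $d'$ gives the reverse inequality, so $|\path(d)(x,z) - \path(d')(x,z)| \le 2\delta$; taking the supremum over $x, z$ produces the claimed Lipschitz bound.

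The one technical point to handle carefully is whether the minimum over $y$ is attained, since $\cD$ is defined without imposing continuity on its elements. The clean fix is to phrase $\path$ via the infimum and replace $y^{\star}$ by an $\varepsilon$-infimizer $y_{\varepsilon}$ with $d(x, y_{\varepsilon}) + d(y_{\varepsilon}, z) \le \path(d)(x,z) + \varepsilon$; the identical chain then gives $\path(d')(x,z) \le \path(d)(x,z) + \varepsilon + 2\delta$, and letting $\varepsilon \to 0$ recovers the estimate. (If one restricts to continuous distances, compactness of $\cS$ guarantees the minimum is attained and $y^{\star}$ exists outright.)

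The main obstacle here is bookkeeping rather than any genuine difficulty: the key observation is that the waypoint variable $y$ appears in \emph{two} terms, which is precisely what generates the factor of $2$ in the Lipschitz constant, and that this constant is harmless for continuity. I would close by noting that this same non-expansiveness (up to a constant) is exactly the continuity property of $\path$ that is invoked when ruling out a spurious limit in the proof of \cref{rem:convergence}.
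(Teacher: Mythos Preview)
Your proposal is correct and follows the same overall strategy as the paper: both establish that $\path$ is $2$-Lipschitz in the $L^{\infty}$ norm and deduce continuity. The difference is in how the pointwise bound is obtained. The paper's argument passes through the intermediate inequality $\bigl|\min_w f(w) - \min_w g(w)\bigr| \le \min_w |f(w) - g(w)|$, which is not valid in general (e.g.\ $f=(0,1)$, $g=(1,1)$ on a two-point set gives $1 \le 0$); your test-point technique---evaluating the infimum for $d'$ at a minimizer (or $\varepsilon$-infimizer) for $d$ and then swapping roles---is the standard rigorous way to compare two infima and avoids this pitfall entirely. Your additional care with the $\varepsilon$-infimizer is also appropriate, since $\cD$ imposes no continuity on its elements and compactness of $\cS$ alone does not guarantee the minimum is attained.
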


\begin{proof}
    Let $d, d' \in \cD$ and $\epsilon > 0$. We have
    \begin{align*}
        \bigl|\gls{path}(d)(s, g) - \gls{path}(d')(s, g)\bigr| & = \Bigl\lvert\min_{w \in \cS} \bigl[d(s, w) + d(w, g)\bigr] - \min_{w \in \cS} \bigl[d'(s, w) + d'(w, g)\bigr]\Bigr\rvert \\
                                                     & \le \min_{w \in \cS} \bigl\lvert d(s, w) + d(w, g) - d'(s, w) - d'(w, g)\bigr\rvert                                       \\
                                                     & \le \min_{w \in \cS} \bigl\lvert d(s, w) - d'(s, w)\bigr\rvert  + \min_{w \in \cS} \bigl|d(w, g) - d'(w, g)\bigr|         \\
                                                     & \le  \lVert d - d' \rVert_{\infty} + \lVert d - d' \rVert_{\infty}                                                        \\
                                                     & = 2 \lVert d - d' \rVert_{\infty}.
    \end{align*}
    Thus, if $\|d - d'\|_{\infty} < \epsilon/2$, we have $\bigl\lVert\path(d) - \path(d')\bigr\rVert_{\infty} < \epsilon$.
\end{proof}

\begin{lemma}
    \label{rem:monotonicty}
    For any $s,g \in \cS$ and $d \in  \cD$ we have that $\path(d)(s, g) \le d(s, g)$.
\end{lemma}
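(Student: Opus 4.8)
The plan is to exploit the waypoint structure of the minimization directly. Since $\path(d)(s,g) = \min_{w \in \cS}\bigl[d(s,w) + d(w,g)\bigr]$ is a minimum over all intermediate states $w$, any particular admissible choice of $w$ furnishes an upper bound on the minimum. The natural choice is $w = s$ itself, which is a valid element of $\cS$ and hence eligible in the minimization.

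First I would substitute $w = s$ into the definition \refas{Eq.}{eq:path_relaxation} to obtain the inequality $\path(d)(s,g) \le d(s,s) + d(s,g)$. Then I would invoke the defining property of the distance class $\cD$, namely that $d(s,s) = 0$, to collapse the right-hand side to $d(s,g)$. Chaining these gives $\path(d)(s,g) \le d(s,g)$, which is exactly the claim.

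There is no substantive obstacle here; the argument is identical to the chain of inequalities already established inside the proof of \cref{lem:quasimetric_fixed}, where selecting the waypoint $w=s$ and using $d(s,s)=0$ shows $\path(d)(s,g)\le d(s,g)$. The only points requiring (trivial) care are confirming that $s\in\cS$ so it is an eligible waypoint, and citing the correct nonnegativity-and-identity property $d(s,s)=0$ of $\cD$. This pointwise monotonicity is precisely the ingredient that drives the monotone-decreasing sequence used in the proof of \cref{rem:convergence} via the monotone convergence theorem, so although elementary, it is load-bearing for the uniform-convergence argument.
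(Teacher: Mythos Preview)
Your argument is correct and is exactly the natural one: pick the waypoint $w=s$, use $d(s,s)=0$ from the definition of $\cD$, and read off $\path(d)(s,g)\le d(s,g)$.

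It is worth noting that the proof block the paper prints under \cref{rem:monotonicty} is an apparent copy--paste slip: it reproduces the Lipschitz/continuity estimate from \cref{rem:continuity} verbatim and never establishes monotonicity at all. The computation you give \emph{does} appear in the paper, but under the first proof of \cref{lem:quasimetric_fixed}, where the same chain $\path(d)(s,g)=\min_w[d(s,w)+d(w,g)]\le d(s,s)+d(s,g)=d(s,g)$ is displayed. So your proposal coincides with the argument the authors clearly intended (and recorded elsewhere), even though the proof text attached to this particular lemma in the paper is erroneous.
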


\begin{proof}
    Let $d, d' \in \cD$ and $\epsilon > 0$. We have
    \begin{align*}
        \bigl|\path(d)(s, g) - \path(d')(s, g)\bigr|
         & = \Bigl\lvert\min_{w \in \cS} \bigl[d(s, w) + d(w, g)\bigr] - \min_{w \in \cS} \bigl[d'(s, w) +
        d'(w, g)\bigr]\Bigr\rvert                                                                            \\
         & \le \min_{w \in \cS} \bigl\lvert d(s, w) + d(w, g) - d'(s, w) - d'(w, g)\bigr\rvert               \\
         & \le \min_{w \in \cS} \bigl\lvert d(s, w) - d'(s, w)\bigr\rvert + \min_{w \in \cS} \bigl|d(w, g) -
        d'(w, g)\bigr|                                                                                       \\
         & \le \lVert d - d' \rVert_{\infty} + \lVert d - d' \rVert_{\infty}                                 \\
         & = 2 \lVert d - d' \rVert_{\infty}.
    \end{align*}
    Thus, if $\|d - d'\|_{\infty} < \epsilon/2$, we have $\bigl\lVert\path(d) - \path(d')\bigr\rVert_{\infty} < \epsilon$.
\end{proof}

\restatetheorem{lem:quasimetric_fixed}

\begin{proof}
    \begin{minipage}[t]{\linewidth}
        \begin{itemize}
            \item[($\Rightarrow$)] Suppose $\path(d) = d$. Then, for all $s, g, w \in \cS$ we have \[
                      d(s, g) = \path(d)(s, g) = \min_{w \in \cS} \bigl[d(s, w) + d(w, g)\bigr] \le d(s, w) + d(w, g).
                  \] Thus, $d \in \cQ$.
            \item[($\Leftarrow$)] Suppose $d \in \cQ$. Then, for all $s, g \in \cS$ we have \[
                      d(s, g) \le \min_{w \in \cS} \bigl[d(s, w) + d(w, g)\bigr] = \path(d)(s, g).\]
                  We also have $\path(d)(s, g) \le d(s, g)$ by \cref{rem:monotonicty}. Thus, $\path(d) = d$.
        \end{itemize}
    \end{minipage}
\end{proof}

\section{Experimental Details}
\label{app:exp_details}

General hyperparameters are provided in \cref{tab:hyperparameters}.

\begin{table}
    \centering
    \caption{Hyperparameters for \Method{}}
    \label{tab:hyperparameters}
    \begin{tabular}{rc}
        \toprule
        \textbf{Hyperparameter}   & \textbf{Value}                                                 \\
        \midrule
        batch size                & 256                                                            \\
        learning rate             & $3 \cdot  10 ^{-4}$                                            \\
        discount factor           & 0.995                                                          \\
        invariance weight $\gls{zeta}$ & 0.01 in \texttt{medium} locomotion environments, 0.1 otherwise \\
        \bottomrule
    \end{tabular}
\end{table}

We implemented \Method{} using JAX~\citep{bradbury2018jax} within the \Gls{ogbench}~\citep{park2025ogbench} framework.
\Gls{ogbench} requires a per-environment hyperparameter $\alpha$ controlling the behavioral cloning weight to be tuned for each method based on the scale of its losses.
We generally found \Method to work well with similar $\alpha$ values to those used by \gls{crl}.
We used the same values of $\alpha$ as \gls{cmd}'s implementation. For a complete list of alpha values, please refer to the code release of the paper.

To prevent gradients from overflowing, we clip the $\mathcal{T}$ invariance loss per component to be no more than 5.
We also found using a slightly smaller batch size of 256 compared to 512 to be helpful for reducing memory usage.

\subsection{Implementation Details}
\label{app:impl_details}

\newacronym{mlp}{MLP}{Multi-Layer perceptron}

\begin{table}
    \centering
    \caption{Network configuration for \Method{}.}
    \label{tab:config}
    \begin{tabular}{rl}
        \toprule
        \textbf{Configuration}                       & \textbf{Value}                         \\
        \midrule
        latent dimension size                        & $512$                                  \\
        encoder \acrshort{mlp} dimensions            & $(512, 512, 512)$                      \\
        policy \acrshort{mlp} dimensions             & $(512, 512, 512)$                      \\
        layer norm in encoder \acrshort{mlp}s        & \texttt{True}                          \\
        visual encoder (\texttt{visual-} envs)       & \texttt{impala-small}                  \\
        \gls{mrn} components                         & $8$                                    \\
        $\mathcal{T}$ weighting on diagonal elements & \parbox[t]{5cm}{$1$ (navigation, play) \\ $0.5$ (stitch, explore, noisy)} \\
        \bottomrule
    \end{tabular}
\end{table}

The network architecture for \Method{} is described in \cref{tab:config}.
The ``\gls{mrn} components'' refers to the number of ensemble terms $K$ in \cref{eq:mrndef}.
We found $K=8$ components enabled stable learning and expressive distances. We weigh the off-diagonal element, corresponding to the product of the marginals $p(s)p(g)$, on a 0-1 scale compared to the diagonal elements, corresponding to the joint distribution $p(s, g)$. A scale of 0 corresponds to the off-diagonal elements weighing the same as the diagonal elements, and a scale of 1 means that only diagonal elements will matter for $\mathcal{T}$-operator.

\subsection{Ablations}
\label{app:abl}

The full ablation results for \Method{} in the \texttt{pointmaze-teleport-stitch} are presented in \cref{tab:ablation} with success rates and standard errors.

\begin{table}
    \centering
    \caption{Ablation Success rate.}
    \label{tab:ablation}
    \begin{tabular}{rc}
        \toprule
        \textbf{Ablation}                                 & \textbf{Success Rate}            \\
        \midrule
        None                                              & \textbf{\pmformatabl{29.3}{2.2}} \\
        No gradient stopping in $\mathcal{L}_\mathcal{T}$ & \pmformatabl{18.7}{1.8}          \\
        No contrastive loss                               & \pmformatabl{9.8}{1.7}           \\
        No $\mathcal{I}$ loss                             & \pmformatabl{13.3}{2.9}          \\
        No $\mathcal{T}$ loss                             & \pmformatabl{18.5}{2.1}          \\
        \bottomrule
    \end{tabular}
\end{table}

\subsection{Computational Resources}
\label{app:compute}

Experiments were run using NVIDIA A6000 GPUs with 48GB of memory, and 4 CPU cores and 1 GPU per experiment.
Each state-based experiment took around $2$ hours to run with these resources, and each pixel-based experiment took around $4$ hours.

\begin{figure}[htb]
    \centering
    \begin{tikzpicture}

        \pgfplotsset{
            grid=major,
            grid style={gray!30,dashed},
            width=9cm,
            height=6cm,
            legend cell align=left,
            axis lines=left,
            scaled x ticks=false,
            legend style={%
                    draw=none,at={(0.79,0.48)},anchor=center,/tikz/every even column/.append style={%
                            column sep=0.5cm%
                        },legend image post style={ultra thick},
                },
            legend columns=1,
            nodes near coords,
            point meta=explicit symbolic,
            nodes near coords style={
                    anchor=north west,
                    font=\tiny,
                },
        }

        \tikzset{
            declare function={
                    l2(\x) = (0.0779807 - exp(-\x))^2;
                    bce(\x) = 0.0779807*(\x)-0.922019 * ln(1-exp(-\x));
                    dt(\x) = 0.95*exp((\x)-2.5)-\x;
                    opt = 2.5513;
                }
        }

        \begin{axis}[
                title={Comparison of Bregman divergences for fitting distances},
                legend to name=leg:plot,
                name=plot,
                xmin=0,
                xmax=6,
                ymin=-1.8,
                ymax=3.3,
                xlabel={$d$},
                ylabel={Loss},
                samples=100,
            ]

            \draw[dashed] (axis cs:opt,-1.8) edge node[above right=1ex,near end] {$d'$} (axis cs:opt,3.3) ;
            \addplot [smooth,very thick,mark size=2pt,domain=0:6,text0] (x,{l2(x)});
            \addlegendentry{$D_{\ell_2}(d,d') =  (e^{-d} - e^{-d'})^2$}
            \addplot [smooth,very thick,mark size=2pt,domain=0:6,text6] (x,{bce(x)});
            \addlegendentry{$D_{\text{BCE}}(d,d') = (1-e^{-d'})\log(1-e^{-d})-d e^{-d'}$}
            \addplot [smooth,very thick,mark size=2pt,domain=0:6,text2] (x,{dt(x)});
            \addlegendentry{$\Gls{DT}(d,d') = \exp(d-d') - d$}

            \fill[text0] (axis cs:opt,{l2(opt)}) circle (2.3pt);
            \fill[text6] (axis cs:opt,{bce(opt)}) circle (2.3pt);
            \fill[text2] (axis cs:opt,{dt(opt)}) circle (2.3pt);
        \end{axis}
        \node[anchor=north,xshift=1em] at (plot.outer south) {\ref*{leg:plot}};
    \end{tikzpicture}
    \caption{%
        Comparison of Bregman divergences for $e^{-d}$ onto $e^{-d'}$ in expectation.
        All losses are minimized at $d = d'$,
        and share the property that they will be minimized in expectation when $e^{-d} = \mathbb{E} [ e ^{ -d'}]$.
        But only the $D_T(d,d')$ loss has non-vanishing gradients $d \gg d'$ for large $d'$.%
    }
    \label{fig:loss_comparison}
\end{figure}
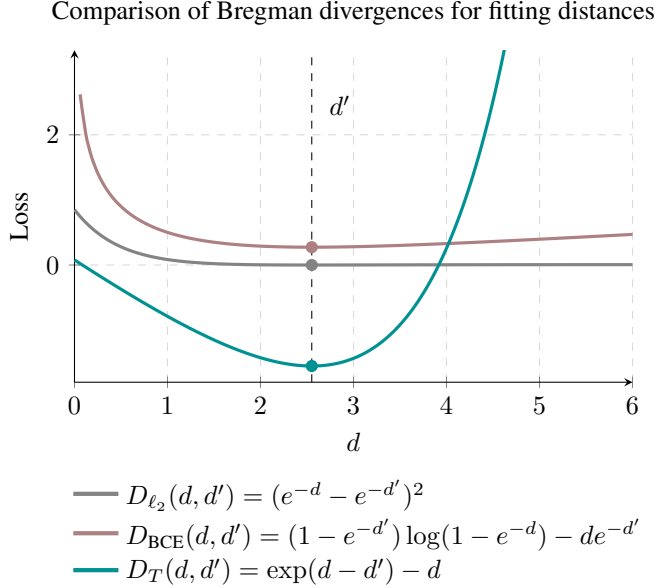

\section{Bregman Divergence in $\mathcal{T}$-invariance}
\label{app:div}

Recall the divergence used in \cref{eq:itakura_saito}:
\begin{equation}
    \Gls{DT}(d, d') \triangleq \exp(d-d') - d.
    \tag{\ref{eq:itakura_saito}}
\end{equation}
This divergence is proportional to the Bregman divergence~\citep{bregman1967relaxation} for the function $F(x) =- \log(x)$, similar to the Itakura-Saito divergence~\citep{itakura1968analysis}.
\begin{align}
    D_{F}(e^{-d'}, e^{-d})
     & = F(e^{-d'}) - F(e^{-d}) - F'(e^{-d}) (e^{-d'} - e^{-d}) \nonumber \\
     & = d'-d + \frac{1}{e^{-d}} (e^{-d'} - e^{-d}) \nonumber             \\
     & = d'-d + \exp(d-d') - 1 \nonumber                                  \\
     & = \Gls{DT}(d, d') + d' -1 .
\end{align}

The minimizer of \cref{eq:itakura_saito} satisfies
\begin{equation}
    \argmin_{d\ge 0} \E_{d'}[\Gls{DT}(d, d')] = -\log \mathbb{E}_{d'}[e^{-d'}]
\end{equation}
when $d'$ is a random ``target'' distance~\citep{banerjee2004clustering}.
In other words, using \cref{eq:itakura_saito} as a loss function regresses $e^{-d}$ onto the expected value of $e^{-d'}$ (or onto the expected value of $e^{\log \gamma -d'}$ as used in \cref{eq:t-invariance_loss}).

The key advantage of this divergence when backing up temporal distances is that the gradients do not vanish when either $d$, $d'$, or the difference between them is small or large.
This property is \emph{not} shared by more standard loss functions like the squared loss or binary cross-entropy loss when applied to the probability space and the models (distances) are in log-probability space.

\begin{algorithm}
    \caption{Temporal Metric Distillation (\Method{})}
    \label{alg:tmd}
    \begin{algorithmic}[1]
        \State \textbf{input:} dataset $\mathcal{D}$, learning rate $\eta$
        \State initialize representations $\gls{phi}, \gls{psi}$, policy $\pi$
        \While{training}
        \State sample $\mathcal{B} = \{s_{i},a_{i},s'_{i},g_{i}\}_{i=1}^{N} \sim \mathcal{D}$
        \State $\overline{\gls{psi}} \gets \gls{psi}$ \rule{0pt}{1.5ex}
        \State $(\gls{phi},\gls{psi}) \gets (\gls{phi},\gls{psi}) - \eta \nabla_{\gls{phi},\gls{psi}} \mathcal{L}_{\text{TMD}}(\gls{phi},\gls{psi}; \overline{\gls{psi}}, \mathcal{B})$\rule{0pt}{2.5ex}
        \Comment {\cref{eq:full_loss}}
        \State $\pi \gets \pi - \eta \nabla_{\pi} \mathcal{L}_{\pi}(\gls{phi},\gls{psi},\pi; \mathcal{B})$\rule{0pt}{2.5ex}
        \Comment {\cref{eq:policy_extraction}}
        \EndWhile
        \State \textbf{return} $\pi$
    \end{algorithmic}
\end{algorithm}

\subsection{Empirical Comparison}
\label{sec:empirical_comparison}

\begin{table}[htpb]
    \centering
    \caption{Ablation of $\mathcal{T}$-invariance loss in \texttt{antmaze-teleport-stitch}}
    \smallskip
    \label{tab:tmd_t_ablation}
    \begin{tabular}{l|c}
        \toprule
        \textbf{Loss}    & \textbf{Success Rate}      \\
        \midrule
        \bf $D_T$ (Ours) & \bf\pmformatabl{29.3}{2.2} \\
        $D_{\ell_2}$     & \pmformatabl{16.1}{1.9}    \\
        $D_{\text{BCE}}$ & \pmformatabl{15.1}{1.9}    \\
        \bottomrule
    \end{tabular}
\end{table}

In practice, we found it was important to use this divergence in \Method{} for stable learning (\cref{tab:tmd_t_ablation}).
Ths loss could also be applied to other \gls{gcrl} algorithms where learned value functions are probabilities but are predicted in log-space to improve gradients.
Future work should explore this divergence in other \gls{gcrl} algorithms to improve training compared to the more commonly used squared loss or binary cross-entropy loss~\citep{kalashnikov2018scalable}.

\section{Algorithm Pseudocode}
\label{sec:alg_pseudocode}

Full pseudocode for \Method{} is provided in \cref{alg:tmd}. We provide the full \Method{} loss function in \cref{eq:full_loss} and the policy extraction loss in \cref{eq:policy_extraction} below for reference:
\begin{align}
    \cL_{\text{TMD}}\bigl(\gls{phi},\gls{psi}; \overline{\gls{psi}}, \cB\bigr)
     & = \gls{Lnce}\left( \gls{phi},\gls{psi} ; \cB \right) + \gls{zeta} \Bigl( \gls{Lact} \left( \gls{phi},\gls{psi}
    ; \cB \right) + \gls{Lt} \left( \gls{phi}, \overline{\gls{psi}}; \cB \right)
    \Bigr)\tag{\ref{eq:full_loss}}                                                                        \\
    \cL_{\pi}\bigl(\pi;\gls{phi},\gls{psi},\{s_{i}, a_{i}, s_{i}', g_{i}\}_{i=1}^{N}\bigr)
     & = \sum_{i,j=1}^{N} (1-\lambda)\mrn\bigl(\gls{phi}(s_{i},\hat{a}_{ij}),\gls{psi}(g_{j})),g_{j}\bigr)
    \nonumber                                                                                             \\*
     & \mspace{100mu} + \lambda
    \mrn\bigl(\gls{phi}(s_{i},\hat{a}_{ii}),\gls{psi}(g_{i})\bigr) + \alpha \bigl\|
    \hat{a}_{ii} - a_{i}\bigr\|_2^2 \tag{\ref{eq:policy_extraction}}                                      \\
    \text{where } \hat{a}_{ij} = \pi(s_{i},g_{j})
     & \text{, batch } \cB \sim p^{\pibeta}
    = \{s_{i}, a_{i}, s_{i}', g_{i}\}_{i=1}^{N}. \nonumber
\end{align}

The components of \cref{eq:full_loss} are (see \cref{sec:impl}):
\begin{align}
    \gls{Lnce} \left( \gls{phi}, \gls{psi}; \{s_{i}, a_{i}, s_{i}', g_{i}\}_{i=1}^{N} \right)
     & = \sum_{i=1}^{N} {\log \biggl(\frac{e^{f(s_{i},a_{i},g_{i}}) }{\sum_{j=1}^{N}
    e^{f(s_{j},a_{j},g_{i})}}\biggr)} \tag{\ref{eq:contrastive_loss}}                                      \\
    \gls{Lact} \left( \gls{phi},\gls{psi} ; \{s_{i}, a_{i}, s_{i}', g_{i}\}_{i=1}^{N} \right)
     & = \sum_{i,j = 1}^{N} \mrn\bigl(\gls{psi}(s_{i}),\gls{phi}(s_{i},a_{j})\bigr) \tag{\ref{eq:i-invariance_loss}} \\
    \gls{Lt} \left( \gls{phi}, \gls{psi}; \{s_{i}, a_{i}, s_{i}', g_{i}\}_{i=1}^{N} \right)
     & = \sum_{i,j=1}^{N} \Gls{DT}\bigl( \mrn(\gls{phi}(s_{i},a_{i}),\gls{psi}(g_{j})), \mrn(\gls{psi}(s_{i}'),\gls{psi}(g_{j}))
    - \log \gamma \bigr) \tag{\ref{eq:t-invariance_loss}} .
\end{align}

\printglossaries

\end{document}